\newtheorem{theorem}{Theorem}
\newtheorem{lemma}{Lemma}
\newtheorem{proof}{Proof}
\newtheorem{definition}{Definition}
\newtheorem{remark}{Remark}
\def\Sp{{\scriptsize{\textcircled{{\emph{\tiny{\textbf{Sp}}}}}}}}
\begin{document}

\title{Interpretable Multi-View Clustering Based on \\
            Anchor Graph Tensor Factorization}

\author{Rui Wang,
        Jing Li,
        Quanxue~Gao,
        Cheng~Deng
\IEEEcompsocitemizethanks{
\IEEEcompsocthanksitem This work was supported in part by xxxxx. (Corresponding author: Q. Gao.)\protect
\IEEEcompsocthanksitem R. Wang, J. Li, Q. Gao are with the State Key laboratory of Integrated Services Networks, Xidian University, Xi'an 710071, China (e-mail: qxgao@xidian.edu.cn).\protect

%\IEEEcompsocthanksitem X. Gao is with the Chongqing Key Laboratory of Image Cognition, Chongqing University of Posts and Telecommunications, Chongqing 400065, China (e-mail: gaoxb@cqupt.edu.cn), and with the School of Electronic Engineering, Xidian University, Xi’an 710071, China (e-mail: xbgao@mail.xidian.edu.cn).
\protect}%
\thanks{Manuscript received XXXX; revised XXXX; accepted XXXX.}}

% The paper headers
\markboth{IEEE TRANSACTIONS}%
%\markboth{Journal of \LaTeX,~Vol.~XXX,~No.~XXX,~XXX~XXX}%
{Shell \MakeLowercase{\textit{Xia et al.}}: Multi-View Clustering via Semi-non-negative Tensor Factorization}

\IEEEtitleabstractindextext{%
\begin{abstract}
The clustering method based on the anchor graph has gained significant attention due to its exceptional clustering performance and ability to process large-scale data. One common approach is to learn bipartite graphs with K-connected components, helping avoid the need for post-processing. However, this method has strict parameter requirements and may not always get K-connected components. To address this issue, an alternative approach is to directly obtain the cluster label matrix by performing non-negative matrix factorization (NMF) on the anchor graph. Nevertheless, existing multi-view clustering methods based on anchor graph factorization lack adequate cluster interpretability for the decomposed matrix and often overlook the inter-view information. We address this limitation by using non-negative tensor factorization to decompose an anchor graph tensor that combines anchor graphs from multiple views. This approach allows us to consider inter-view information comprehensively. The decomposed tensors, namely the sample indicator tensor and the anchor indicator tensor, enhance the interpretability of the factorization. Extensive experiments validate the effectiveness of this method.
\end{abstract}

% Note that keywords are not normally used for peerreview papers.
\begin{IEEEkeywords}
Multi-View clustering, Anchor graph tensor, Tensor Schatten $p$-norm.
\end{IEEEkeywords}}

\maketitle
\IEEEdisplaynontitleabstractindextext
\IEEEpeerreviewmaketitle

\IEEEraisesectionheading{\section{Introduction}\label{sec:introduction}}
\IEEEPARstart{C}{lustering}, a fundamental technique in unsupervised learning, is frequently applied to categorize samples within datasets into distinct groups or clusters. In the burgeoning field of big data, the acquisition of vast volumes of unlabeled data has become increasingly straightforward. Clustering plays a pivotal role in uncovering the intrinsic structures and patterns in such data, thereby facilitating the assignment of labels to unlabeled data, which is crucial for enhancing the utility of these data in downstream tasks \cite{cai2010graph,huang2020regularized,mei2023joint,zhao2023contrastive}. Consequently, clustering has emerged as a prominent area of research in machine learning.

In an era marked by growing data diversity and complexity, individual objects are often characterized by multiple attribute sets, termed as multi-view data. The prevalence of multi-view data in the big data landscape is unmistakable. For example, multimedia can be holistically represented using textual, image, audio, and video features. Similarly, a single image can be analyzed using various descriptors such as scale-invariant feature transformations (SIFT), histograms of oriented gradients (HOG), and local binary patterns (LBP). In automated driving, the integration of heterogeneous sensor data, including RGB camera, infrared camera, and LIDAR, is imperative \cite{yang2018multi,chao2021survey}.
To enhance clustering in such contexts, multi-view clustering (MVC) methods have gained significant traction. These methods seek to amalgamate different data views, providing a more holistic representation than traditional single-view approaches, often resulting in superior performance.

Among various MVC technologies, multi-view clustering based on graph stands out as a benchmark approach. However, the graph-based methods involve $n \times n$ graph construction and eigen-decomposition of Laplacian matrix whose computational complexity are $\mathcal{O}(Vn^2)$ and $\mathcal{O}(n^3)$, respectively. It limits the application on large-scale multi-view data.
To end this, a myriad of anchor graph-based methods have been proposed \cite{li2020multiview,yang2022multi,xia2023TBGL}.
Anchor graph can well represent multi-view data by modeling the relationship between $n$ samples and $m$ anchors. It can reduce the computational complexity and storage complexity due to $m \ll n$.

Generally speaking, to avoid post-processing, the goal of some multi-view clustering methods based on anchor graphs is to learn a bipartite graph with K-connected components and obtain the clustering labels from the connected components. However, this method has high parameter requirements and may not be able to find K-connected components in some cases.

To address this, another idea is to perform non-negative matrix factorization (NMF) on anchor graph to obtain the cluster indicator matrix directly. Nonetheless, current multi-view NMF-based clustering methods encounter several challenges:
\begin{enumerate}
  \item Independently performing NMF on each view may fail to maintain the spatial structures inherent across views in the original data.
  \item Some methods use non-negative constraints on the clustering indicator matrix obtained from decomposition only, leading to a lack of interpretability.
\end{enumerate}

To overcome these limitations, we propose an interpretable multi-view clustering model based on Anchor Graph Tensor Factorization (AGTF). Our approach extends NMF to operate directly on third-order tensors, overcoming the traditional limitation of NMF to second-order matrices. This extension allows for direct processing of third-order tensors, preserving more intrinsic spatial structure information across different views. Additionally, we provide meaningful interpretations for the two indicator tensors derived from the tensor factorization, enhancing the process's interpretability. To derive a common consensus indicator matrix from different views, we utilize the tensor Schatten p-norm to impose a low-rank constraint on the indicator tensor, effectively capturing the complementary information between views.

Our extensive experiments validate the superior clustering performance of our model.

The primary contributions of this paper are summarized as follows:
\begin{itemize}
  \item We extend NMF to Nonnegative Tensor Factorization (NTF), enabling direct decomposition of third-order tensors, and interpret the tensors resulting from this decomposition.
  \item We introduce the tensor Schatten p-norm to exploit complementary information across different views, facilitating the derivation of a common consensus indicator matrix.
  \item We present an optimization algorithm for NTF, demonstrating its convergence to the KKT stationary point both mathematically and experimentally.
\end{itemize}

%------------------------------------------------------------------------
\section{Related Works}\label{Related work}
\subsection{MVC Based on Anchor Graph}
Multi-view clustering (MVC) method based on anchor graph is proposed to approach the problem that graph based methods can not handle large-scale multi-view data. Compared with the $n \times n$ similarity graph, the $n \times m (m \ll n)$ anchor graph can obviously reduce the computational complexity and storage complexity.

A partially anchor graph-based approach is to perform some clustering strategies, such as spectral clustering and subspace clustering, on the constructed anchor graph to obtain clustering results. But this kind of post-processing is time-consuming, many multi-view clustering methods based on anchor graph without post-processing have been proposed gradually.

In order to avoid post-processing, it is more common to construct bipartite graphs with anchor graphs, then learn bipartite graphs with K-connected components, and finally obtain clustering labels through connected components.
MVSC \cite{li2015large} constructs sub-bipartite-graphs in each view using raw data points and anchors, and uses local manifold fusion to fuse the information from the individual sub-graphs. Clustering labels of anchors are used to handle the out-of-sample problems.
MVGL \cite{zhan2018graph} learns a global graph from different single view graphs. It avoids post-processing by using connected components to obtain clustering metrics, but it may not be able to handle large-scale data.
SFMC \cite{li2020multiview} proposes a parametr-free multi-view cluster graph fusion framework for obtaining compatible union graphs by self-supervised weighting, and it uses connected components to represent clusters.
MSC-BG \cite{yang2022multi} uses the Schatten $p$-norm to explore complementary information between different views and obtains clusters by K-connecting components.
TBGL \cite{xia2023TBGL} uses the Schatten $p$-norm to explore similarities of inter-view and combines the $\ell_{\textrm{1,2}}$-norm minimization regularization and connectivity constraints to explore the similarity of intra-view.

Although these methods can avoid post-processing with connected components, they are very demanding on parameters and may not find K-connected components in some cases.

\subsection{MVC Based on NMF}
Non-negative matrix factorization (NMF) is a common data analysis method.
To transition from a single-view NMF to a multi-view approach, a straightforward strategy is to align the indicator matrices from different views, ensuring similarity or identity. This concept forms the basis of many existing NMF-based Multi-View Clustering (MVC) methods.

In their effort to amalgamate multi-view data within the NMF paradigm, Akata et al. \cite{akata2011non} enforced identical clustering index matrices across different views via NMF decomposition. However, recognizing the often incomparable scales among views, Liu et al. \cite{liu2013multi} proposed performing NMF independently on each view, followed by aligning the resultant clustering index matrices to establish cross-view comparability.

Building upon this NMF-based MVC framework, several enhancements have been introduced. Cai et al. \cite{cai2010graph}, acknowledging NMF's limitation to only achieve consistent data representation without preserving local geometric structures, incorporated a graph regularization term in the single-view context. This concept was further extended to multi-view scenarios by Wang et al. \cite{wang2018multiview} and Gao et al. \cite{gao2019multi}. Additionally, Deng et al. \cite{deng2023multi} introduced Tikhonov regularization to this framework, enhancing its reliability.

Owing to their ability to bypass the construction of similarity graphs, NMF-based MVC methods are particularly adept at managing large-scale multi-view data, unlike their graph-based counterparts. However, the efficiency of factorization is compromised in high-dimensional data, as these methods conventionally apply NMF directly to the original data. Addressing this, Yang et al. \cite{yang2021fast} integrated the concept of anchor graphs \cite{liu2010large,li2015large,kang2020large,li2020multiview} within the NMF framework. They proposed using fused-view anchor graphs for NMF, rather than the raw data, and further refined this approach in subsequent studies \cite{yang2022EMKCMAG}, focusing on consensus representation learning from multiple anchor graphs. Yang et al.'s later works \cite{yanga2022ERMCAGR,yangb2022ECMCAGE} broadly applied the anchor graph concept to NMF, significantly enhancing algorithmic efficiency.
Moreover, as anchor graphs in each view are structured as $\mathbb{R}^{n \times m}$ (where $n$ and $m$ represent the number of samples and anchors, respectively), the importance of performing NMF on these graphs is underscored. Under suitable constraints, the resultant matrices from NMF can be interpreted as sample and anchor indicator matrices \cite{ding2006orthogonal}.

However, the aforementioned methods somewhat overlook the inter-view spatial structure present in multi-view data, which can be advantageous for clustering. Therefore, refining MVC approaches based on NMF to incorporate this aspect remains a promising research direction.

%------------------------------------------------------------------------
\section{Notations}\label{Notations}
In this section, we present the notations adopted throughout this paper, along with detailed explanations of the tensor operations employed. A comprehensive list of these notations and their corresponding meanings is provided in Table \ref{notations}.

\begin{table}[h]
\caption{Notations and Descriptions.}
\label{notations}
\centering
\begin{tabular}{cc}
\toprule[2pt]
Notation                                & Descriptions \\
\midrule[0.5pt]
$\bm{\mathcal{H}}$                      & 3rd-order tensors \\
${\mathbf{H}}$                          & matrices \\
${\bf{h}}$                              & vectors \\
${h_{ijk}}$                             & the entries of $\bm{\mathcal{H}}$ \\
${\mathbf {H}}^{(i)}$                   & the $i$-th frontal slice of ${\bm{\mathcal {H}}}$ \\
$\overline {{\bm{\mathcal {H}}}}$       & \makecell[c]{the discrete Fourier transform (DFT) \\
                                                    of ${\bm{\mathcal {H}}}$ along the third dimension} \\
$\mathrm{tr}(\mathbf{H})$               & the trace and transpose of matrix $\mathbf{H}$ \\
${\left\| {\bm{\mathcal H}}\right\|_F}$ & the F-norm of ${\bm{\mathcal H}}$ \\
\midrule[2pt]
\end{tabular}
\end{table}

We next introduce the tensor t-product and tensor Schatten $p$-norm as detailed in Definitions \ref{def:t-prod} and \ref{tensorSpNorm}, respectively:

\begin{definition}[t-product \cite{kilmer2011factorization}]
\label{def:t-prod}
Let ${\bm{\mathcal{A}}}\in\mathbb{R}^{n_1\times m\times n_3}$ and ${\bm{\mathcal{B}}}\in \mathbb{R}^{m\times n_2\times n_3}$. The t-product ${\bm{\mathcal{A}}}*{\bm{\mathcal{B}}}\in\mathbb{R}^{n_1\times n_2\times n_3}$ is defined as
\begin{align*}
    {\bm{\mathcal{A}}}*{\bm{\mathcal{B}}} = \mathrm{ifft}(\mathrm{bdiag}(\overline{\mathbf A}\overline{\mathbf B}),[\ ],3),
\end{align*}
where $\overline{\mathbf A}=\mathrm{bdiag}(\bm{\overline{\mathcal{A}}})$ represents the block diagonal matrix, with its blocks being the frontal slices of $\bm{\overline{\mathcal{A}}}$.
\end{definition}

\begin{definition}\label{tensorSpNorm}~\cite{gao2020enhanced}
For a given tensor ${\bm{\mathcal H}}\in{\mathbb{R}}^{n_1 \times n_2 \times n_3}$ with $h=\min(n_1,n_3)$, the tensor Schatten $p$-norm of  ${\bm{\mathcal H}}$ is defined as:
\begin{equation}
\begin{array}{c}
{\left\| {\bm{\mathcal H}} \right\|_{{\Sp}}} = {\left( {\sum\limits_{i = 1}^{{n_2}} {\left\| {{{\bm{\overline {\cal H} }^{(i)}}}} \right\|_{{\Sp}}^p} } \right)^{\frac{1}{p}}} = {\left( {\sum\limits_{i = 1}^{{n_2}} {\sum\limits_{j = 1}^h {{\sigma_j}{{\left( {{\bm{\overline {\cal H} }^{(i)}}} \right)}^p}} } } \right)^{\frac{1}{p}}},
\end{array}
\label{4}
\end{equation}
where $0 < p \leqslant 1$, and ${\sigma_j}(\overline{\bm{\mathcal H}}^{(i)})$ denotes the j-th singular value of $\overline{\bm{\mathcal H}}^{(i)}$.
\end{definition}

It is important to note that for $0 < p \leqslant 1$, when $p$ is judiciously chosen, the Schatten $p$-norm can significantly enhance the approximation accuracy of the rank function, as discussed in~\cite{zha2020benchmark,xie2016weighted}.

\begin{figure*}[htbp]
	\centering
    \setlength{\abovecaptionskip}{-5mm}
    \begin{minipage}[c]{0.72\linewidth}
       \centering
	   \includegraphics[width=1.0\linewidth]{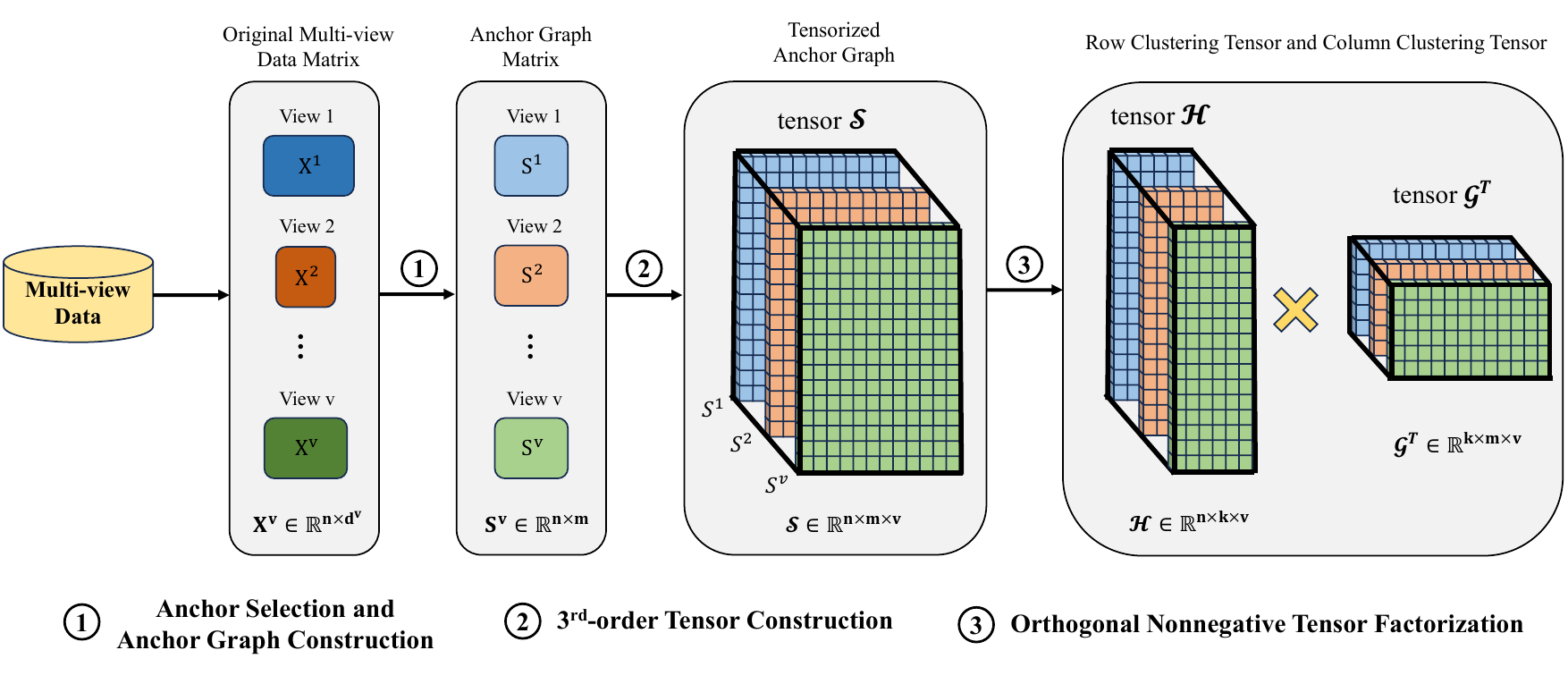}
	   \caption{Process flow of our model}
	   \label{process}
    \end{minipage}
    \hspace{10mm}
	\begin{minipage}[c]{0.20\linewidth}
	   \centering
	   \includegraphics[width=1.0\linewidth]{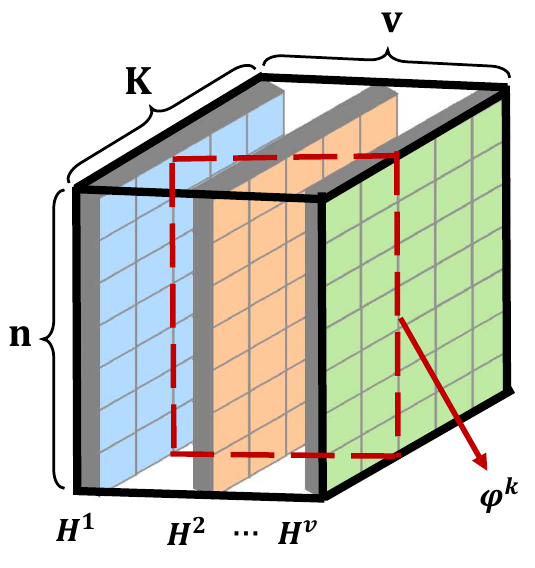}
	   \caption{Interpretation of the tensor Schatten $p$-norm}
	   \label{tensor}
	\end{minipage}
    \vspace{-5mm}
\end{figure*}

%------------------------------------------------------------------------
\section{Methodology}\label{Methodology}

\subsection{Motivation and Objective}
As described in related work, learning bipartite graphs with K-connected components for clustering has some limitations.
Inspired by Yang et al. \cite{yang2022EMKCMAG}, we integrate the concept of an anchor graph into the Non-negative Matrix Factorization (NMF) framework to obtain the cluster indicator matrix directly. This leads us to the following formulation:
\begin{equation}\label{SemiNmfAnchor}
    \min_{\mathbf{H} \geqslant 0}{\left\| {\mathbf S} - {\mathbf{H}} {\mathbf{G}}^{\mathrm{T}} \right \|}_F^2, \quad\textrm{s.t.}  \quad \mathbf{H}^{\mathrm{T}} \mathbf{H}=\mathbf{I},
\end{equation}
where $\mathbf S \in \mathbb{R}^{n \times m}$ denotes the anchor graph, encapsulating the relationships between $n$ samples and $m$ anchors. The methodology for selecting anchors and constructing the anchor graph can be found in \cite{xia2023TBGL}.

$\mathbf{H}$ in (\ref{SemiNmfAnchor}) can be considered as a row clustering indicator matrix according to \cite{ding2006orthogonal,ding2008convex}. Due to its non-negative orthogonal constraint, we can easily interpret it, i.e., each row of $\mathbf{H}$ has only one non-zero value, and the position of that value can indicate the cluster to which the sample shown in this row belongs. The $\mathbf{G}$ obtained by decomposing the anchor graph $\mathbf{S}$ can be regarded as the cluster indicator matrix of columns, i.e., the cluster indicator matrix of anchors. We consider constraining it to make it more interpretable, as follows:
\begin{equation}
    \begin{aligned}
        &\min {\left\| {\mathbf S} - {\mathbf{H}} {\mathbf{G}}^{\mathrm{T}} \right \|}_F^2, \\
        \quad\textrm{s.t.}  \quad \mathbf{H} \geqslant 0, &\mathbf{H}^{\mathrm{T}} \mathbf{H}=\mathbf{I}, \mathbf{G} \geqslant 0, \mathbf{G} \cdot \textbf{1} = \textbf{1}
    \end{aligned}
\end{equation}
Under this constraint, each row of $\mathbf{G}$ represents an anchor data, each element in the row can be considered as the probability that the anchor belongs to the corresponding cluster, and we believe that the position of the largest value can indicate the cluster to which the anchor belongs.

In Multi-view Clustering (MVC), the conventional NMF-based MVC framework applies NMF separately to each view, subsequently aligning the resulting indicator matrices from different views for consistency, as expressed in the equation:
\begin{equation}
    \min \sum_{v=1}^{V} {\left\| {\mathbf{S}^{(v)}} - {\mathbf{H}^{(v)}} {\mathbf{G}^{(v)}}^{\mathrm{T}} \right \|}_F^2 + \sum_{v=1}^{V} \lambda_v {\left\| {\mathbf{H}^{(v)}} - {\mathbf{H}^{*}} \right \|}_F^2,
\end{equation}

However, applying Non-negative Matrix Factorization (NMF) independently to different views results in the loss of crucial spatial structure information among multi-view data. This information is pivotal for improving clustering performance. Therefore, our aim is to effectively harness this information.

Consequently, we propose using a third-order tensor for representing multi-view data and opt for Non-negative Tensor Factorization (NTF) over NMF. This approach is intended to maximally preserve the spatial structure information across different views.

As previously mentioned, the tensors $\bm{\mathcal H}$ and $\bm{\mathcal G}$, derived from NTF, serve as the sample and anchor indicator tensors, respectively. To amalgamate information from multiple views, aligning the indicator matrices for various views becomes our subsequent step. Drawing inspiration from the remarkable efficacy of the tensor Schatten $p$-norm \cite{gao2020enhanced}, we impose a tensor low-rank constraint on $\bm{\mathcal H}$ and $\bm{\mathcal G}$ using this norm. Details about the tensor Schatten $p$-norm are provided in Remark \ref{rTensorSp}.

Our formulated objective function is:
\begin{equation}\label{of}
    \begin{aligned}
        &\min {\left\| \bm{\mathcal S} - \bm{\mathcal H} * \bm{\mathcal G}^{\mathrm{T}} \right \|}_F^2  + \lambda_1 {\left \|{\bm{\mathcal H}}\right \|}_\Sp^p + \lambda_2 {\left \|{\bm{\mathcal G}}\right \|}_\Sp^p \\
        & \emph{\textrm{s.t.}} \quad  \bm{\mathcal H}\geqslant 0, \bm{\mathcal H}^{\mathrm{T}} * \bm{\mathcal H} = \bm{\mathcal I}, \mathbf{G}^{(v)} \geqslant 0, \mathbf{G}^{(v)} \cdot \textbf{1} = \textbf{1}
    \end{aligned}
\end{equation}
where $0<p \leqslant 1$, and $\lambda_1$, $\lambda_2$ are the hyperparameters for the Schatten $p$-norm term. The matrix $\mathbf{G}^{(v)}$ represents the $v$-th frontal slice of the tensor $\bm{\mathcal{G}}$ as shown in Fig \ref{process}.

Figure \ref{process} illustrates the workflow of our proposed model.

\begin{remark}[Explanation of the tensor Schatten $p$-norm]\label{rTensorSp}
    Consider a matrix $\mathbf{A}$ with its singular values $\sigma_1, \cdots, \sigma_h$ arranged in descending order. The Schatten $p$-norm of $\mathbf{A}$ is defined as $\| \mathbf{A} \|_\Sp^p = \sigma_1^p + \cdots + \sigma_h^p$. As $p$ approaches 0, $\lim_{p \rightarrow 0} \| \mathbf{A} \|_\Sp^p$ equals the rank of $\mathbf{A}$, which is the count of non-zero singular values. Thus, the Schatten $p$-norm minimization, especially for $p<1$, potentially achieves a rank closer to the desired target than the conventional nuclear norm ($p=1$).

    For the tensor $\bm{\mathcal{H}}$, as depicted in Fig \ref{tensor}, its $k$-th lateral slice $\Phi^k$ represents the relationship of $n$ samples with the $k$-th cluster across different $v$ views. The goal in multi-view clustering is to harmonize the sample-cluster relationships in different views, making $\mathbf{H}_{:,k}^1, \cdots, \mathbf{H}_{:,k}^v$ as congruent as possible. However, the clustering structures often vary significantly across views. Applying the tensor Schatten $p$-norm to $\bm{\mathcal{H}}$ ensures that $\Phi^k$ maintains a spatially low-rank structure, leveraging complementary information across views and fostering consistency in the clustering indicators.
\end{remark}

%\begin{figure}[htbp]
%	\centering
%    \setlength{\abovecaptionskip}{-0.1cm}
%	\includegraphics[width=0.45\linewidth]{figure/tensorSpNorm.pdf}
%	\caption{Interpretation of the tensor Schatten $p$-norm}
%	\label{tensor}
%    \vspace{-5mm}
%\end{figure}

\subsection{Optimization}
Utilizing the Augmented Lagrange Multiplier (ALM) method, we introduce three auxiliary variables $\bm{\mathcal Q}$, $\bm{\mathcal J}$, and $\bm{\mathcal F}$, setting $\bm{\mathcal H} = \bm{\mathcal Q}$, $\bm{\mathcal H} = \bm{\mathcal J}$, and $\bm{\mathcal G} = \bm{\mathcal F}$, respectively, with the condition $\bm{\mathcal Q} \geqslant 0$. Consequently, the objective function (\ref{of}) can be reformulated into an unconstrained format as follows:
\begin{equation}\label{objective function}
    \begin{aligned}
        &\min {\left\| \bm{\mathcal S} - \bm{\mathcal H}*\bm{\mathcal G}^{\mathrm{T}} \right \|}_F^2 + \lambda_1{\| \bm{\mathcal J}\|}_\Sp^p + \lambda_2{\| \bm{\mathcal F}\|}_\Sp^p \\
        & + \frac{\mu}{2} {\left\| \bm{\mathcal H}-\bm{\mathcal Q} + \frac{\bm{\mathcal Y_1}}{\mu}\right \|}_F^2 + \frac{\rho}{2} {\left\| \bm{\mathcal H}-\bm{\mathcal J} + \frac{\bm{\mathcal Y_2}}{\rho}\right \|}_F^2 \\
        & + \frac{\sigma}{2} {\left\| \bm{\mathcal G}-\bm{\mathcal F} + \frac{\bm{\mathcal Y_3}}{\sigma}\right \|}_F^2, \\
        & \emph{\textrm{s.t.}} \quad \bm{\mathcal Q}\geqslant 0, \bm{\mathcal H}^{\mathrm{T}}*\bm{\mathcal H} = \bm{\mathcal I}, \mathbf{G}^{(v)} \geqslant 0, \mathbf{G}^{(v)} \cdot \textbf{1} = \textbf{1}
    \end{aligned}
\end{equation}
Here, $\bm{\mathcal Y_1}$, $\bm{\mathcal Y_2}$, and $\bm{\mathcal Y_3}$ denote the Lagrange multipliers, while $\mu$, $\rho$, and $\sigma$ represent the penalty parameters.

To address this optimization challenge (\ref{objective function}), we employ an alternating optimization approach. Each variable is optimized independently while keeping the others fixed. The optimization process comprises five steps:

$\bullet$\textbf{Solving for $\bm{\mathcal G}$ with fixed $\bm{\mathcal Q}, \bm{\mathcal H}, \bm{\mathcal J}$, and $\bm{\mathcal F}$.} The objective function (\ref{objective function}) transforms into:
\begin{equation}\label{solveG1}
    \begin{aligned}
        \min_{\mathbf{G}^{(v)} \geqslant 0, \mathbf{G}^{(v)} \cdot \textbf{1} = \textbf{1}} {\left\| \bm{\mathcal S} - \bm{\mathcal H}*\bm{\mathcal G}^{\mathrm{T}} \right \|}_F^2 + \frac{\sigma}{2} {\left\| \bm{\mathcal G}-\bm{\mathcal F} + \frac{\bm{\mathcal Y_3}}{\sigma}\right \|}_F^2
    \end{aligned}
\end{equation}

Further, (\ref{solveG1}) can be reformulated in the frequency domain as:
\begin{equation}\label{solveG2}
    \begin{aligned}
        \min_{\mathbf{G}^{(v)} \geqslant 0, \mathbf{G}^{(v)} \cdot \textbf{1} = \textbf{1}}
        &\sum_{v=1}^{V}{\left\| \bm{\mathcal {\overline{S}}}^{(v)} - \bm{\mathcal {\overline{H}}}^{(v)} (\bm{\mathcal {\overline{G}}}^{(v)})^{\mathrm{T}} \right \|}_F^2 \\
        & + \sum_{v=1}^{V} \frac{\sigma}{2} {\left\| \bm{\mathcal {\overline{G}}}^{(v)} - \bm{\mathcal {\overline{F}}}^{(v)} + \frac{\bm{\mathcal {\overline{Y}}}_3^{(v)}}{\sigma}\right \|}_F^2,
    \end{aligned}
\end{equation}
where $\bm{\mathcal {\overline{G}}} = \mathrm{fft}({{\bm{\mathcal G}}},[\ ],3)$, and other variables are transformed similarly.

Clearly, the equation (\ref{solveG2}) can be reformulated as:
\begin{align}\label{solveG3}
   &\min_{\mathbf{G}^{(v)} \geqslant 0, \mathbf{G}^{(v)} \cdot \textbf{1} = \textbf{1}}
     -2\mathrm{tr}\left((\bm{\mathcal{\overline{G}}}^{(v)})^{\mathrm{T}} (\bm{\mathcal{\overline{S}}}^{(v)})^{\mathrm{T}} \bm{\mathcal{\overline{H}}}^{(v)}\right) \nonumber\\
    &- \sigma \mathrm{tr}\left((\bm{\mathcal{\overline{G}}}^{(v)})^{\mathrm{T}} \bm{\mathcal{\overline{W}}}_1^{(v)}\right) + (1+\frac{\sigma}{2}) \left( (\bm{\mathcal{\overline{G}}}^{(v)})^{\mathrm{T}} \bm{\mathcal{\overline{G}}}^{(v)} \right) ,
\end{align}
where $\bm{\mathcal{\overline{W}}}_1^{(v)} =  \bm{\mathcal{\overline{F}}}^{(v)} - \frac{\bm{\mathcal{\overline{Y}}}^{(v)}_3}{\sigma}$.

Further simplification of (\ref{solveG3}) yields:
\begin{equation}\label{solveG4}
    \begin{aligned}
        &\min (1+\frac{\sigma}{2}) \left[ \mathrm{tr} \left( (\bm{\mathcal{\overline{G}}}^{(v)})^{\mathrm{T}} \bm{\mathcal{\overline{G}}}^{(v)} \right) - 2\mathrm{tr}\left( (\bm{\mathcal{\overline{G}}}^{(v)})^{\mathrm{T}} \bm{\mathcal{\overline{B}}}_1^{(v)} \right)  \right]  \\
        & \quad \quad \emph{\textrm{s.t.}} \quad \mathbf{G}^{(v)} \geqslant 0, \mathbf{G}^{(v)} \cdot \textbf{1} = \textbf{1},
    \end{aligned}
\end{equation}
where $\bm{\mathcal{\overline{B}}}^{(v)}_1 = \left[ (\bm{\mathcal{\overline{S}}}^{(v)})^{\mathrm{T}} \bm{\mathcal{\overline{H}}}^{(v)} + (\sigma / 2) \bm{\mathcal{\overline{W}}}_1^{(v)} \right] / (1+\sigma / 2)$.

By returning (\ref{solveG4}) to the time domain and simplifying it, we can get:
\begin{equation}\label{solveG}
    \min_{\mathbf{G}^{(v)} \geqslant 0, \mathbf{G}^{(v)} \cdot \textbf{1} = \textbf{1}}
    (1+\frac{\sigma}{2}) \left|| \bm{\mathcal{G}} - \bm{\mathcal{B}}_1 \right||_{F}^2
\end{equation}

We can derive the solution of (\ref{solveG}) as $\mathbf{G}^{(v)}_i = (\mathbf{B_1}^{(v)}_i+ \gamma\textbf{1})$ according to \cite{nie2016constrained}, where $\mathbf{G}^{(v)}_i$ represents pulling the matrix $\mathbf{G}^{(v)}$ into a vector and $\gamma$ is a Lagrange multiplier.

$\bullet$\textbf{Solve for $\bm{\mathcal H}$ with fixed $\bm{\mathcal Q}, \bm{\mathcal G}, \bm{\mathcal J}$ and $\bm{\mathcal F}$.} (\ref{objective function}) then transforms into:
\begin{equation}\label{solveH1}
    \begin{aligned}
        & \min_{\bm{\mathcal H}^{\mathrm{T}}*\bm{\mathcal H} = \bm{\mathcal I}} {\left\| \bm{\mathcal S} - \bm{\mathcal H}*\bm{\mathcal G}^{\mathrm{T}} \right \|}_F^2  \\
        & + \frac{\mu}{2} {\left\| \bm{\mathcal H}-\bm{\mathcal Q} + \frac{\bm{\mathcal Y}_1}{\mu}\right \|}_F^2 + \frac{\rho}{2} {\left\| \bm{\mathcal H}-\bm{\mathcal J} + \frac{\bm{\mathcal Y_2}}{\rho}\right \|}_F^2,
    \end{aligned}
\end{equation}

Further, (\ref{solveH1}) can be reformulated in the frequency domain as:
\begin{equation}\label{solveH2}
    \begin{aligned}
        \min_{(\bm{\mathcal {\overline{H}}}^{(v)})^{\mathrm{T}} \bm{\mathcal {\overline{H}}}^{(v)} = \mathbf I}
        &\sum_{v=1}^{V}{\left\| \bm{\mathcal {\overline{S}}}^{(v)} - \bm{\mathcal {\overline{H}}}^{(v)} (\bm{\mathcal {\overline{G}}}^{(v)})^{\mathrm{T}} \right \|}_F^2 \\
        & + \sum_{v=1}^{V} \frac{\mu}{2} {\left\| \bm{\mathcal {\overline{H}}}^{(v)} - \bm{\mathcal {\overline{Q}}}^{(v)} + \frac{\bm{\mathcal {\overline{Y}}}_1^{(v)}}{\mu}\right \|}_F^2  \\
        & + \sum_{v=1}^{V} \frac{\rho}{2} {\left\| \bm{\mathcal {\overline{H}}}^{(v)} - \bm{\mathcal {\overline{J}}}^{(v)} + \frac{\bm{\mathcal {\overline{Y}}}_2^{(v)}}{\rho}\right \|}_F^2,
    \end{aligned}
\end{equation}
%where $\bm{\mathcal {\overline{H}}} = \mathrm{fft}({{\bm{\mathcal H}}},[\ ],3)$, and other variables are transformed similarly.

Clearly, the equation (\ref{solveH2}) can be reformulated as:
\begin{align}\label{solveH3}
   &\min_{(\bm{\mathcal{\overline{H}}}^{(v)})^{\mathrm{T}} \bm{\mathcal{\overline{H}}}^{(v)} = \mathbf{I}}
   -2\mathrm{tr}\left(\bm{\mathcal{\overline{G}}}^{(v)} (\bm{\mathcal{\overline{H}}}^{(v)})^{\mathrm{T}} \bm{\mathcal{\overline{S}}}^{(v)} \right) \nonumber\\
   &- \mu \mathrm{tr}\left((\bm{\mathcal{\overline{H}}}^{(v)})^{\mathrm{T}} \bm{\mathcal{\overline{W}}}_2^{(v)}\right) - \rho \mathrm{tr}\left((\bm{\mathcal{\overline{H}}}^{(v)})^{\mathrm{T}} \bm{\mathcal{\overline{W}}}_3^{(v)}\right),
\end{align}
where $\bm{\mathcal{\overline{W}}}_2^{(v)} =  \bm{\mathcal{\overline{Q}}}^{(v)} - \frac{\bm{\mathcal{\overline{Y}}}^{(v)}_1}{\mu}$ and $\bm{\mathcal{\overline{W}}}_3^{(v)} =  \bm{\mathcal{\overline{J}}}^{(v)} - \frac{\bm{\mathcal{\overline{Y}}}^{(v)}_2}{\rho}$.

Further simplification of (\ref{solveH3}) yields:
\begin{equation}\label{solveH4}
    \max_{(\bm{\mathcal {\overline{H}}}^{(v)})^{\mathrm{T}} \bm{\mathcal {\overline{H}}}^{(v)} = \mathbf I}
    \mathrm{tr} \left( (\bm{\mathcal {\overline{H}}}^{(v)})^{\mathrm{T}} \bm{\mathcal {\overline{B}}}^{(v)}_2 \right)
\end{equation}
where $\bm{\mathcal {\overline{B}}}^{(v)}_2 = 2 \bm{\mathcal {\overline{S}}}^{(v)} \bm{\mathcal {\overline{G}}}^{(v)} + \mu \bm{\mathcal {\overline{W}}}_2^{(v)} + \rho \bm{\mathcal {\overline{W}}}_3^{(v)}$.

To address (\ref{solveH4}), we introduce the following theorem:
\begin{theorem}\label{theorem solveH1}
    Given two matrices $\mathbf{G}$ and $\mathbf{P}$, where $\mathbf{G} (\mathbf{G})^{\mathrm{T}}=\mathbf{I}$ and $\mathbf{P}$ has the singular value decomposition $\mathbf{P}=\mathbf{\Lambda} \mathbf{S}(\mathbf{V})^{\mathrm{T}}$, the optimal solution of
    \begin{equation}\label{theorem solveH2}
        \max_{\mathbf{G} (\mathbf{G})^{\mathrm{T}}=\mathbf{I}} \mathrm{tr}(\mathbf{G} \mathbf{P})
    \end{equation}
    is $\mathbf{G}^\ast=\mathbf{V}[\mathbf{I},\mathbf{0}](\mathbf{\Lambda})^{\mathrm{T}}$.
\end{theorem}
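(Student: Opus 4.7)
The plan is to reduce the constrained maximization to a standard trace inequality by absorbing the orthogonal factors of the SVD of $\mathbf{P}$. Substituting $\mathbf{P} = \mathbf{\Lambda}\mathbf{S}\mathbf{V}^{\mathrm{T}}$ into the objective and invoking the cyclic property of trace gives $\mathrm{tr}(\mathbf{G}\mathbf{P}) = \mathrm{tr}(\mathbf{V}^{\mathrm{T}}\mathbf{G}\mathbf{\Lambda}\mathbf{S})$, which motivates the change of variables $\mathbf{Z} := \mathbf{V}^{\mathrm{T}}\mathbf{G}\mathbf{\Lambda}$. The second step is to verify that this change of variables preserves the feasible set: since $\mathbf{V}$ and $\mathbf{\Lambda}$ are orthogonal, a direct computation gives $\mathbf{Z}\mathbf{Z}^{\mathrm{T}} = \mathbf{V}^{\mathrm{T}}\mathbf{G}\mathbf{\Lambda}\mathbf{\Lambda}^{\mathrm{T}}\mathbf{G}^{\mathrm{T}}\mathbf{V} = \mathbf{V}^{\mathrm{T}}\mathbf{V} = \mathbf{I}$, so $\mathbf{Z}$ ranges over matrices with orthonormal rows, and the problem is equivalent to $\max_{\mathbf{Z}\mathbf{Z}^{\mathrm{T}}=\mathbf{I}} \mathrm{tr}(\mathbf{Z}\mathbf{S})$.

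Next I would exploit the diagonal structure of $\mathbf{S}$, which carries the singular values $s_1 \geq \cdots \geq s_k \geq 0$ on its leading diagonal and zeros elsewhere. Expanding the trace yields $\mathrm{tr}(\mathbf{Z}\mathbf{S}) = \sum_{i=1}^{k} s_i\, z_{ii}$, and because each row of $\mathbf{Z}$ has unit Euclidean norm we have $|z_{ii}| \leq 1$ for every $i$. Since $s_i \geq 0$, this yields the upper bound $\mathrm{tr}(\mathbf{Z}\mathbf{S}) \leq \sum_{i=1}^{k} s_i$, and equality forces $z_{ii} = 1$ for all $i$; combined with the row-norm constraint, this in turn forces the remaining entries in those rows to vanish, so $\mathbf{Z}^{\ast} = [\mathbf{I}, \mathbf{0}]$. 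Undoing the substitution gives $\mathbf{G}^{\ast} = \mathbf{V}[\mathbf{I},\mathbf{0}]\mathbf{\Lambda}^{\mathrm{T}}$, and feasibility is confirmed via $\mathbf{G}^{\ast}(\mathbf{G}^{\ast})^{\mathrm{T}} = \mathbf{V}[\mathbf{I},\mathbf{0}][\mathbf{I},\mathbf{0}]^{\mathrm{T}}\mathbf{V}^{\mathrm{T}} = \mathbf{V}\mathbf{V}^{\mathrm{T}} = \mathbf{I}$.

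The main obstacle I anticipate is purely bookkeeping rather than anything conceptual: one must pin down whether $\mathbf{S}$ is $k \times n$ or $n \times k$ (i.e., thin versus wide SVD), and correspondingly whether $[\mathbf{I},\mathbf{0}]$ denotes a horizontal or vertical block, so that all of the matrix products in the reduction are dimensionally consistent. Once those conventions are fixed, the pointwise bound $|z_{ii}| \leq 1$ from the row-orthonormality of $\mathbf{Z}$ and the extraction of the diagonal sum from $\mathrm{tr}(\mathbf{Z}\mathbf{S})$ are routine. A minor subtlety worth flagging is that the optimizer above is unique only when the nonzero singular values are distinct and nonzero; in the degenerate case of zero or repeated singular values, any $\mathbf{G}$ whose associated $\mathbf{Z}$ has $z_{ii}=1$ on the indices where $s_i>0$ remains optimal, which is harmless for the algorithmic use in equation~(\ref{solveH4}).
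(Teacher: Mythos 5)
Your proof is correct and follows essentially the same route as the paper: your change of variables $\mathbf{Z}=\mathbf{V}^{\mathrm{T}}\mathbf{G}\mathbf{\Lambda}$ is exactly the paper's auxiliary matrix $\mathbf{H}$, and both arguments reduce the objective to $\sum_i s_{ii} z_{ii}$ via the cyclic trace property, bound the diagonal entries by $1$ using row-orthonormality, and undo the substitution to recover $\mathbf{G}^\ast=\mathbf{V}[\mathbf{I},\mathbf{0}]\mathbf{\Lambda}^{\mathrm{T}}$. Your added remarks on dimension bookkeeping, explicit feasibility of $\mathbf{G}^\ast$, and non-uniqueness under repeated or zero singular values are minor refinements the paper omits, but the underlying argument is identical.
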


\begin{proof}
    Starting from the SVD $\mathbf{P} = \mathbf{\Lambda} \mathbf{S} (\mathbf{V})^{\mathrm{T}}$ and in conjunction with Theorem \ref{theorem solveH1}, it becomes clear that
    \begin{equation}\label{theorem solveH3}
        \begin{aligned}
            \mathrm{tr}(\mathbf{G} \mathbf{P}) &= \mathrm{tr}(\mathbf{G} \mathbf{\Lambda}^{(v)} \mathbf{S} (\mathbf{V})^{\mathrm{T}}) \\
                                              &= \mathrm{tr}(\mathbf{S} (\mathbf{V})^{\mathrm{T}} \mathbf{G} \mathbf{\Lambda}) \\
                                              &= \mathrm{tr}(\mathbf{S} \mathbf{H}) \\
                                              &= \sum_i s_{ii} h_{ii},
        \end{aligned}
    \end{equation}
    where $\mathbf{H} = (\mathbf{V})^{\mathrm{T}} \mathbf{G} \mathbf{\Lambda}$, and $s_{ii}$ and $h_{ii}$ are the $(i,i)$ elements of $\mathbf{S}$ and $\mathbf{H}$, respectively. It is straightforward to verify that $\mathbf{H} (\mathbf{H})^{\mathrm{T}} = \mathbf{I}$, where $\mathbf{I}$ denotes the identity matrix. Consequently, $-1 \leqslant h_{ii} \leqslant 1$ and $s_{ii} \geqslant 0$, leading to the conclusion:
    \begin{equation}\label{theorem solveH4}
        \mathrm{tr}(\mathbf{G} \mathbf{P}) = \sum_i s_{ii} h_{ii} \leqslant \sum_i s_{ii}.
    \end{equation}
    Equality is achieved when $\mathbf{H}$ is an identity matrix, and $\mathrm{tr}(\mathbf{G} \mathbf{P})$ attains its maximum when $\mathbf{H} = [\mathbf{I}, \mathbf{0}]$.
\end{proof}

Hence, we can derive the solution of (\ref{solveH4}) as:
\begin{equation}\label{solveH}
    \bm{\mathcal {\overline{H}}}^{(v)} = \bm{{\overline{\Lambda}}}^{(v)}_2 (\bm{{\overline{V}}}^{(v)}_2)^{\top},
\end{equation}
where $\bm{{\overline{\Lambda}}}^{(v)}_2$ and $\bm{{\overline{V}}}^{(v)}_2$ are determined through the singular value decomposition (SVD) of $\bm{\mathcal {\overline{B}}}^{(v)}_2$.

$\bullet$\textbf{Solve $\bm{\mathcal Q}$ with fixed $\bm{\mathcal G}, \bm{\mathcal H}, \bm{\mathcal J}$ and $\bm{\mathcal F}$.} (\ref{objective function}) becomes:
\begin{equation}\label{solveQ1}
	\min_{\bm{\mathcal Q}\geqslant 0} \frac{\mu}{2} {\left\| \bm{\mathcal H}-\bm{\mathcal Q} + \frac{\bm{\mathcal Y}_1}{\mu}\right \|}_F^2
\end{equation}

Transforming this into the frequency domain and rearranging the variables, we obtain:
\begin{equation}\label{solveQ2}
    \min_{\bm{\mathcal {\overline{Q}}}^{(v)} \geqslant 0} \sum_{v=1}^{V}{ \frac{\mu}{2} \left\| \bm{\mathcal {\overline{Q}}}^{(v)} -(\bm{\mathcal {\overline{H}}}^{(v)} + \frac{\bm{\mathcal {\overline{Y}}}^{(v)}_1}{\mu}) \right \|}_F^2
\end{equation}

As per \cite{yang2021fast}, the solution for Equation (\ref{solveQ2}) is given by:
\begin{equation}\label{solveQ}
    \bm{\mathcal {\overline{Q}}}^{(v)} = \left(\bm{\mathcal {\overline{H}}}^{(v)} + \frac{\bm{\mathcal {\overline{Y}}}^{(v)}_1}{\mu} \right)_+
\end{equation}

\textbullet\ \textbf{Solve for $\bm{\mathcal J}$ with fixed $\bm{\mathcal Q}, \bm{\mathcal H}, \bm{\mathcal G}$, and $\bm{\mathcal F}$.} The equation (\ref{objective function}) is reformulated as:
\begin{equation}\label{solveJ1}
    \min \lambda_1{\|\bm{\mathcal J}\|}_\Sp^p + \frac{\rho}{2} {\left\|\bm{\mathcal H} - \bm{\mathcal J} + \frac{\bm{\mathcal Y}_2}{\rho}\right\|}_F^2,
\end{equation}
where, after completing the square with respect to $\bm{\mathcal J}$, we obtain
\begin{equation}\label{solveJ2}
    \begin{aligned}
        \bm{\mathcal J}^* = \arg \min \frac{1}{2}\left\|{\bm{\mathcal H} + \frac{\bm{\mathcal Y}_2}{\rho} - \bm{\mathcal J}}\right\|_F^2 + \frac{\lambda_1}{\rho}{\|\bm{\mathcal J}\|}_\Sp^p,
    \end{aligned}
\end{equation}
yielding a closed-form solution as outlined in Lemma \ref{T2} \cite{gao2020enhanced}:

\begin{lemma}\label{T2}
    Given ${\mathcal Z} \in {\mathbb{R}}^{n_1 \times n_2 \times n_3}$ with a t-SVD ${\mathcal Z} = {\mathcal U} * {\mathcal S} * {{\mathcal V}^{\mathrm{T}}}$, the optimal solution for
    \begin{equation}\label{tensor-gaozx-2020}
        \begin{array}{l}
            \min_{\mathcal X} \frac{1}{2}\left\| {{\mathcal X} - {\mathcal Z}} \right\|_F^2 + \tau \left\| {\mathcal X} \right\|_{{\Sp}}^p
        \end{array}
    \end{equation}
    is ${\mathcal X}^* = {\Gamma _\tau }({\mathcal Z}) = {\mathcal U}*\mathrm{ifft}({P_\tau }(\overline {\mathcal Z} ))*{{\mathcal V}^{\mathrm{T}}}$, where ${P_\tau }(\overline {\mathcal Z} )$ is an f-diagonal 3rd-order tensor. Its diagonal elements can be determined using the GST algorithm described in \cite{gao2020enhanced}.
\end{lemma}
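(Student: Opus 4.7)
The plan is to exploit the fact that both the Frobenius norm and the tensor Schatten $p$-norm (as used here) admit slice-wise decompositions in the Fourier domain, so the tensor proximal problem collapses to $n_3$ independent matrix subproblems, each of which can be handled by a classical singular-value thresholding argument.

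First I would apply the DFT along the third dimension. By Parseval's theorem, $\|\mathcal{X}-\mathcal{Z}\|_F^2 = \frac{1}{n_3}\sum_{i=1}^{n_3}\|\overline{\mathbf{X}}^{(i)}-\overline{\mathbf{Z}}^{(i)}\|_F^2$, and by Definition~\ref{tensorSpNorm} the tensor Schatten $p$-norm itself is already defined slice-by-slice on $\overline{\mathcal{X}}$. Hence, after absorbing the $1/n_3$ factor into a rescaled threshold, the problem separates into $n_3$ independent matrix problems of the form
\begin{equation*}
\min_{\mathbf{X}_i}\; \tfrac{1}{2}\bigl\|\mathbf{X}_i-\overline{\mathbf{Z}}^{(i)}\bigr\|_F^2 + \tilde\tau\,\bigl\|\mathbf{X}_i\bigr\|_{\Sp}^p,
\end{equation*}
one for each frontal slice of $\overline{\mathcal{Z}}$.

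Next I would solve each matrix subproblem. Writing the SVD $\overline{\mathbf{Z}}^{(i)} = \mathbf{U}_i\boldsymbol{\Sigma}_i\mathbf{V}_i^{\mathrm{T}}$, the key step is to show that the minimizer $\mathbf{X}_i^*$ shares the singular vectors $\mathbf{U}_i,\mathbf{V}_i$. This follows from von Neumann's trace inequality: for any $\mathbf{X}_i$ with singular values $\sigma(\mathbf{X}_i)$,
\begin{equation*}
\tfrac{1}{2}\|\mathbf{X}_i-\overline{\mathbf{Z}}^{(i)}\|_F^2 \ge \tfrac{1}{2}\sum_j\bigl(\sigma_j(\mathbf{X}_i)-\sigma_j(\overline{\mathbf{Z}}^{(i)})\bigr)^2,
\end{equation*}
with equality iff the singular vectors are aligned, while the Schatten $p$-norm depends only on the singular values. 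The problem therefore reduces to the scalar problems $\min_{x\ge 0}\tfrac{1}{2}(x-\sigma_j)^2+\tilde\tau\,x^p$, whose solutions are given by the Generalized Soft-Thresholding (GST) operator $P_{\tilde\tau}(\cdot)$ component-wise; I would cite the GST algorithm in~\cite{gao2020enhanced} for the closed-form minimizer of this scalar $p$-shrinkage problem.

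Finally I would reassemble: the per-slice optimizer is $\overline{\mathbf{X}}_i^* = \mathbf{U}_i\,P_{\tilde\tau}(\boldsymbol{\Sigma}_i)\,\mathbf{V}_i^{\mathrm{T}}$, and applying the inverse DFT along the third dimension together with the definition of the t-product yields $\mathcal{X}^* = \mathcal{U}*\mathrm{ifft}(P_\tau(\overline{\mathcal{Z}}))*\mathcal{V}^{\mathrm{T}}$, matching the claimed form. The main obstacle is the singular-vector alignment step in each matrix subproblem, since the Schatten $p$-norm is nonconvex for $p<1$; von Neumann's inequality handles this cleanly because it bounds the fidelity term below by a function of singular values alone, leaving a scalar nonconvex problem in each $\sigma_j$ that GST resolves. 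A secondary bookkeeping issue is correctly tracking the $1/n_3$ Parseval factor so that the effective threshold applied inside $P_\tau$ matches the statement of the lemma.
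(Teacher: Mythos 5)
The paper does not prove this lemma at all---it is imported verbatim from \cite{gao2020enhanced}---so there is no in-paper argument to compare against; your reconstruction is the standard proof from that reference and is essentially correct: Parseval separation into Fourier-domain frontal slices, von Neumann's trace inequality to align singular vectors, GST for the resulting scalar $p$-shrinkage problems, and reassembly via the inverse DFT. Two small points worth making explicit if you write this out in full: the frontal slices $\overline{\mathbf{Z}}^{(i)}$ are complex, so you need the Hermitian-transpose version of von Neumann's inequality, and you should note that GST is monotone so the thresholded singular values stay in descending order and the reassembled factors $\mathbf{U}_i\,P_{\tilde\tau}(\boldsymbol{\Sigma}_i)\,\mathbf{V}_i^{\mathrm{H}}$ constitute a valid SVD, which is what lets you identify them with the t-SVD factors $\mathcal{U},\mathcal{V}$ of $\mathcal{Z}$ in the final expression.
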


Thus, the solution for (\ref{solveJ2}) is expressed as:
\begin{equation}\label{solveJ}
    \bm{\mathcal J}^* = {\Gamma _{\frac{\lambda_1}{\rho}}} (\bm{\mathcal H} + \frac{\bm{\mathcal Y}_2}{\rho}).
\end{equation}

$\bullet$\textbf{Solve for $\bm{\mathcal F}$ with fixed $\bm{\mathcal Q}, \bm{\mathcal H}, \bm{\mathcal J}$ and $\bm{\mathcal G}$.}  The objective function (\ref{objective function}) simplifies to:
\begin{equation}\label{solveF1}
	\min \lambda_2{\| \bm{\mathcal F}\|}_\Sp^p + \frac{\sigma}{2} {\left\| \bm{\mathcal G} - \bm{\mathcal F} + \frac{\bm{\mathcal Y}_3}{\sigma}\right \|}_F^2,
\end{equation}
By completing the square with respect to $\bm{\mathcal F}$, we obtain:
\begin{equation}\label{solveF2}
	\begin{aligned}
		\bm{\mathcal F}^* = \arg \min \frac{1}{2}\left\|{\bm{\mathcal G} + \frac{\bm{\mathcal Y}_3}{\sigma} - \bm{\mathcal F}}\right\|_F^2 + \frac{\lambda_2}{\sigma}{\|\bm{\mathcal F}\|}_\Sp^p,
	\end{aligned}
\end{equation}

Applying Lemma \ref{T2}, the solution to (\ref{solveF2}) is given by:
\begin{equation}\label{solveF}
    \bm{\mathcal F}^* = {\Gamma _{\frac{\lambda_2}{\sigma}}} (\bm{\mathcal G} + \frac{\bm{\mathcal Y}_3}{\sigma}).
\end{equation}

\begin{algorithm}[htb]
\caption{Interpretable Multi-view Clustering based on Anchor Graph Tensor Factorization (AGTF)}
\label{A1}
%\textbf{Input}: Data matrices $\{{\mathbf{X}}^{(v)}\}_{v=1}^{V}\in \mathbb{R}^{N\times d_v}$; anchors numbers $m$; cluster number $K$.\\
%\textbf{Output}: Cluster labels $\mathbf{Y}$ of each data points.\\
%\textbf{Initialize}: $\alpha_v=1/V$, $\mu=10^{-5}$, $\rho=10^{-5}$, $\eta=1.1$, $\bm{\mathcal Y}_1=0$, ${{\bf{Y}}_2^{(v)}} = 0$, and $\mathbf M^{(v)}$ is an identity matrix;
\begin{algorithmic}[1] %[1] enables line numbers
\REQUIRE Data matrices $\{{\mathbf{X}}^{(v)}\}_{v=1}^{V}\in \mathbb{R}^{N\times d_v}$; anchors numbers $m$; cluster number $K$.\\ %算法的输入参数：Input
\ENSURE Cluster labels $\mathbf{Y}$ of each data points.\\ %算法的输出：Output
\STATE \textbf{Initialize}: $\mu=10^{-5}$, $\rho=10^{-5}$, $\sigma=10^{-5}$, $\eta=1.3$, $\bm{\mathcal Y}_1=0$, $\bm{\mathcal Y}_2=0$, $\bm{\mathcal Y}_3=0$ and $\mathbf{\overline {\bm{\mathcal H}}}^{(v)}$ is identity matrix;
\STATE Compute graph matrix $\mathbf S^{(v)}$ of each views;
\WHILE{not condition}
\STATE Update $\bm{\mathcal {{G}}}$ by solving  (\ref{solveG});
\STATE Update $\bm{\mathcal {\overline{H}}}^{(v)}$ by solving  (\ref{solveH});
\STATE Update $\bm{\mathcal {\overline{Q}}}^{(v)}$ by solving  (\ref{solveQ});
\STATE Update ${\bm{{\mathcal J}}}$ by using  (\ref{solveJ});
\STATE Update ${\bm{{\mathcal F}}}$ by using  (\ref{solveF});
\STATE Update $\bm{\mathcal Y}_1$, $\bm{\mathcal Y}_2$, $\bm{\mathcal Y}_3$, $\mu$, $\rho$ and $\sigma$: $\bm{\mathcal Y}_1=\bm{\mathcal Y}_1+\mu(\bm{\mathcal H}-\bm{\mathcal Q})$, $\bm{\mathcal Y}_2=\bm{\mathcal Y}_2+\rho(\bm{\mathcal H}-\bm{\mathcal J})$, $\bm{\mathcal Y}_3=\bm{\mathcal Y}_3+\sigma(\bm{\mathcal G}-\bm{\mathcal F})$, $\mu=\min(\eta\mu, 10^{13})$, $\rho=\min(\eta\rho, 10^{13})$, $\sigma=\min(\eta\sigma, 10^{13})$;
\ENDWHILE
\STATE Calculate the $K$ clusters by using \\
$\mathbf Q=\sum_{v=1}^V \mathbf Q^{(v)} / V$;
\STATE \textbf{return} Clustering result.
\end{algorithmic}
\end{algorithm}

%\subsection{Complexity Analysis}
%\textbf{Computational Complexity}. The computational complexity of our proposed model exists in two main stages: 1) Construct the anchor graph tensor $\bm{\mathcal{S}}$. 2) optimization by iterative solving Eq. (\ref{objective function}).
%The computational complexity of the first stage is ${\mathcal{O}}(Vnmd+Vnm\log(m))$, where $V$, $n$, $m$ represent the number of views, samples and anchors, respectively, $d$ represents the sum of all view feature dimensions.
%The second stage focuses on solving the five variables ($\bm{\mathcal G}$, $\bm{\mathcal H}$, $\bm{\mathcal Q}$, $\bm{\mathcal J}$ and $\bm{\mathcal F}$). The computational complexity required to solve them is ${\mathcal{O}}(Vm^2k+Vmk^2+kVn\log(Vn))$, ${\mathcal{O}}(Vn^2k+Vnk^2+kVm\log(Vm))$, ${\mathcal{O}}(Vnk)$, ${\mathcal{O}}(2kVn\log(Vn)+V^2kn)$ and ${\mathcal{O}}(2kVm\log(Vm)+V^2km)$, respectively.
%
%\textbf{Storage complexity}.
%The storage requirements for $\bm{\mathcal {G}}$, $\bm{\mathcal {H}}$, $\bm{\mathcal {Q}}$, $\bm{\mathcal {J}}$, $\bm{\mathcal F}$, $\bm{\mathcal {Y}_1}$, $\bm{\mathcal {Y}_2}$ and $\bm{\mathcal {Y}_3}$ have complexities of ${\mathcal{O}}()$, ${\mathcal{O}}()$, ${\mathcal{O}}()$, ${\mathcal{O}}()$, ${\mathcal{O}}()$, ${\mathcal{O}}()$, ${\mathcal{O}}()$ and ${\mathcal{O}}()$, respectively.

\begin{table}[t]
\caption{Multi-view datasets used in our experiments}
\label{datasets}
\centering
\scalebox{0.60}
{
\begin{tabular}{ccccc}
\toprule[2pt]
\#Dataset 	& \#Samples & \#View & \#Class 	& \multicolumn{1}{l}{\#Feature} \\
\midrule
MSRC    & 210  & 5 & 7 & \multicolumn{1}{l}{24, 576, 512, 256, 254} \\
HandWritten4   & 2000 & 4 & 10 & \multicolumn{1}{l}{76, 216, 47, 6} \\
Mnist4	& 4000 	& 3	& 4	& \multicolumn{1}{l}{30, 9, 30}   \\
Scene15 & 4485 	& 3	& 15	& \multicolumn{1}{l}{1800, 1180, 1240}   \\
Reuters & 18758 & 5 & 6 & \multicolumn{1}{l}{21531, 24892, 34251, 15506, 11547} \\
Noisy MNIST	& 50000 	& 2	& 10	& \multicolumn{1}{l}{784, 784}   \\
\toprule[2pt]
\end{tabular}
}
\end{table}
\subsection{Convergence Analysis}
\begin{theorem}\label{thm1}[Convergence Analysis of Algorithm~\ref{A1}]
Let $\mathcal{P}_{k}=\{\bm{{\mathcal{Q}}}_{k},\bm{{\mathcal{H}}}_{k}, \bm{{\mathcal{G}}}_{k}, \bm{{\mathcal{J}}}_{k}, \bm{{\mathcal{F}}}_{k},\bm{{\mathcal{Y}}}_{1,k}, \bm{{\mathcal{Y}}}_{2,k}, \bm{{\mathcal{Y}}}_{3,k}\},\ 1\leq k< \infty$ in \eqref{objective function} be a sequence generated
by \textbf{Algorithm~1}, then
\begin{enumerate}
\item  $\mathcal{P}_{k}$ is bounded;
\item  Any accumulation point of $\mathcal{P}_{k}$ is a stationary KKT point of \eqref{objective function}.
\end{enumerate}
\end{theorem}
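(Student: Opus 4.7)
The plan is to follow the standard Augmented Lagrangian Method convergence template, adapted for the tensor setting with nonconvex Schatten $p$-norm and t-product orthogonality. I would organize the argument in three stages: establishing boundedness of the whole primal-dual sequence $\mathcal{P}_{k}$, showing that the equality-constraint residuals $\bm{\mathcal H}_{k}-\bm{\mathcal Q}_{k}$, $\bm{\mathcal H}_{k}-\bm{\mathcal J}_{k}$ and $\bm{\mathcal G}_{k}-\bm{\mathcal F}_{k}$ vanish, and then passing to the limit in the per-subproblem first-order conditions to recover the KKT system of \eqref{objective function}.

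For the first stage, two of the primal blocks are bounded directly by their hard constraints: $\bm{\mathcal H}_{k}^{\mathrm T}*\bm{\mathcal H}_{k}=\bm{\mathcal I}$ forces every frontal slice of $\bm{\overline{\mathcal H}}_{k}$ to have orthonormal columns, so $\|\bm{\mathcal H}_{k}\|_{F}$ is deterministically bounded, while the simplex constraint on each $\mathbf{G}^{(v)}_{k}$ bounds $\bm{\mathcal G}_{k}$ entrywise. For the multipliers, I would exploit the closed-form updates of the auxiliary variables: the projection formula \eqref{solveQ} gives $\bm{\mathcal Y}_{1,k+1}=\min(\bm{\mathcal Y}_{1,k}+\mu_{k}\bm{\mathcal H}_{k+1},\,0)$, which is bounded in terms of $\|\bm{\mathcal H}_{k+1}\|_{F}$ and $\|\bm{\mathcal Y}_{1,k}\|_{F}$ by a one-step induction; the analogous simplex projection handles $\bm{\mathcal Y}_{3,k}$. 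For $\bm{\mathcal Y}_{2,k}$, the optimality of \eqref{solveJ2} yields $\bm{\mathcal Y}_{2,k+1}\in-\partial(\lambda_{1}\|\cdot\|_{\Sp}^{p})(\bm{\mathcal J}_{k+1})$ in the limiting sense, and this is bounded on bounded sets once $\bm{\mathcal J}_{k}$ is controlled via the prox expression \eqref{solveJ}. Propagating these bounds through the closed-form updates yields boundedness of $\bm{\mathcal Q}_{k},\bm{\mathcal J}_{k},\bm{\mathcal F}_{k}$.

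For the second stage, the geometric growth $\mu_{k+1}=\eta\mu_{k}$ (treating the $10^{13}$ cap as effectively unbounded for the asymptotic argument, or alternatively running a monotone-descent argument on the augmented Lagrangian once the cap is reached) together with the multiplier update gives $\|\bm{\mathcal H}_{k+1}-\bm{\mathcal Q}_{k+1}\|_{F}=\|\bm{\mathcal Y}_{1,k+1}-\bm{\mathcal Y}_{1,k}\|_{F}/\mu_{k}\to 0$, and likewise for the two remaining residuals, so every accumulation point $\mathcal{P}^{\ast}$ is feasible. For the third stage, I extract a convergent subsequence $\mathcal{P}_{k_{j}}\to\mathcal{P}^{\ast}$ and pass to the limit in the subproblem optimality conditions: for the nonnegativity and simplex blocks this uses closedness of the corresponding normal cones, for the $\bm{\mathcal H}$-block it uses continuity of the slice-wise SVD characterization established by Theorem \ref{theorem solveH1}, and for $\bm{\mathcal J},\bm{\mathcal F}$ it uses outer semicontinuity of the limiting subdifferential of the Schatten $p$-norm together with Lemma \ref{T2}. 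Assembling the limiting relations reproduces the full KKT system of \eqref{of}.

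The main obstacle is the nonconvex tensor Schatten $p$-norm with $0<p<1$: its subdifferential is only guaranteed to be outer semicontinuous in the limiting (Mordukhovich) sense, so standard convex-ADMM arguments do not transfer verbatim. I would address this by relying on the prox-regularity of $\|\cdot\|_{\Sp}^{p}$ on the bounded region identified in stage one, combined with the explicit t-SVD-based proximal map in Lemma \ref{T2}, which furnishes single-valued continuity of the relevant update and thus a clean limiting argument. A secondary difficulty is that $\bm{\mathcal H}^{\mathrm T}*\bm{\mathcal H}=\bm{\mathcal I}$ is a nonconvex manifold constraint, but working slice-wise in the Fourier domain reduces it to a classical Stiefel-manifold problem whose tangent and normal cones are well understood, which makes the final limit passage tractable.
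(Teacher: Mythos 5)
Your overall template (bound the sequence, kill the residuals via $\mu_k\to\infty$, pass to the limit in the subproblem optimality conditions) is the same as the paper's, and your stages two and three essentially coincide with its second part. The genuine problems are in stage one. First, your route to bounding $\bm{\mathcal{Y}}_{2,k}$ and $\bm{\mathcal{Y}}_{3,k}$ rests on the claim that the (limiting) subdifferential of $\|\cdot\|_{\Sp}^{p}$ is ``bounded on bounded sets.'' For $0<p<1$ this is false: the derivative of $\sigma^{p}$ behaves like $p\,\sigma^{p-1}$ and blows up as a singular value approaches zero, and the iterates $\bm{\mathcal{J}}_{k}$, $\bm{\mathcal{F}}_{k}$ produced by the GST shrinkage are exactly the kind of rank-deficient points where this happens. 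The paper's proof only works because it replaces $\partial|\eta|^{p}$ by the smoothed surrogate $p\eta/\max\{\epsilon^{2-p},|\eta|^{2-p}\}$, which is capped by $p/\epsilon^{1-p}$ uniformly; that explicit regularization (via the Lewis lemma on spectral functions) is the load-bearing step that makes $\bm{\mathcal{Y}}_{2,k}=\lambda_1\nabla\|\bm{\mathcal{J}}_{k}\|_{\Sp}^{p}$ and $\bm{\mathcal{Y}}_{3,k}=\lambda_2\nabla\|\bm{\mathcal{F}}_{k}\|_{\Sp}^{p}$ bounded, and your argument has no substitute for it. Relatedly, you assign $\bm{\mathcal{Y}}_{3}$ to an ``analogous simplex projection,'' but the simplex constraint lives in the $\bm{\mathcal{G}}$-subproblem; $\bm{\mathcal{F}}$ is updated by the Schatten-$p$ proximal map, so $\bm{\mathcal{Y}}_{3}$ must be handled exactly like $\bm{\mathcal{Y}}_{2}$.

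Second, the bound on $\bm{\mathcal{Y}}_{1,k}$ does not close. The identity $\bm{\mathcal{Y}}_{1,k+1}=\min(\bm{\mathcal{Y}}_{1,k}+\mu_k\bm{\mathcal{H}}_{k+1},0)$ is correct, but $\bm{\mathcal{H}}_{k+1}$ is only orthogonal, not nonnegative, so $\mu_k\bm{\mathcal{H}}_{k+1}$ has entries that are negative and of order $\mu_k$; the $\min(\cdot,0)$ truncation does not remove these, and with $\mu_k$ growing geometrically a ``one-step induction'' yields a bound that diverges with $k$. (The paper instead extracts $\bm{\mathcal{Y}}_{1,k+1}$ from the first-order condition of the $\bm{\mathcal{H}}$-subproblem combined with the already-established bound on $\bm{\mathcal{Y}}_{2,k+1}$.) Since your stage-two claim $\|\bm{\mathcal{H}}_{k+1}-\bm{\mathcal{Q}}_{k+1}\|_F=\|\bm{\mathcal{Y}}_{1,k+1}-\bm{\mathcal{Y}}_{1,k}\|_F/\mu_k\to 0$ requires uniform boundedness of $\bm{\mathcal{Y}}_{1,k}$, this gap propagates into feasibility of the accumulation point. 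Finally, you never establish boundedness of the augmented Lagrangian itself; the paper needs the summability $\sum_k(\mu_k+\mu_{k-1})/(2\mu_{k-1}^2)<\infty$ to bound $\bm{\mathcal{L}}$ and thence $\|\bm{\mathcal{J}}_{k}\|_{\Sp}^{p}$, $\|\bm{\mathcal{F}}_{k}\|_{\Sp}^{p}$, and $\bm{\mathcal{Q}}_{k}$. Your alternative via nonexpansiveness of the GST prox could in principle replace that step, but only after the multiplier bounds above are repaired.
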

The proof will be provided in the appendix and we need to mention that the KKT conditions can be used to determine the stop conditions for Algorithm \ref{A1}, which are
$\| \bm{{\mathcal{Q}}}_{k}-\bm{{\mathcal{H}}}_{k} \|_\infty\leq\varepsilon$, $\| \bm{{\mathcal{Q}}}_{k}-\bm{{\mathcal{J}}}_{k} \|_\infty\leq\varepsilon$, $\| \bm{{\mathcal{G}}}_{k}-\bm{{\mathcal{F}}}_{k} \|_\infty\leq\varepsilon$.

\section{Experiments}
In this section, we carry out comprehensive comparative experiments on our proposed methods. The efficacy of clustering is gauged using three key metrics: Accuracy (ACC), Normalized Mutual Information (NMI), and Purity. Higher values of these metrics indicate superior clustering performance. To ensure reliability, we conducted 20 independent trials for each method, recording the mean and variance of the results. Further details, including experimental setups and hyper-parameter choices, are provided in the appendix.

The datasets employed in our experiments are detailed in Table \ref{datasets}.
We compare our approach against the following state-of-the-art methods: CSMSC \cite{luo2018consistent}, GMC \cite{wang2019gmc}, ETLMSC \cite{WuLZ19}, LMVSC \cite{kang2020large}, FMCNOF \cite{yang2021fast}, SFMC \cite{li2020multiview}, FPMVS-CAG \cite{wang2021fast}, and Orth-NTF \cite{li2023orthogonal}.

\begin{table*}[h]
\caption{Clustering performance on MSRC, HandWritten4, Mnist4 and Scene15.}
\label{result1}
\centering
\resizebox{\textwidth}{!}
{
\begin{tabular}{c|ccc|ccc|ccc|ccc}
\toprule[2pt]
Datasets    &\multicolumn{3}{c}{MSRC}  &\multicolumn{3}{c}{HandWritten4} &\multicolumn{3}{c}{Mnist4}  &\multicolumn{3}{c}{Scene15}\\
\midrule
Metrices  &ACC &NMI &Purity  &ACC &NMI &Purity   &ACC &NMI &Purity   &ACC &NMI &Purity\\
\midrule[1pt]
CSMSC   &0.758$\pm$0.007 &0.735$\pm$0.010 &0.793$\pm$0.008      &0.806$\pm$0.001 &0.793$\pm$0.001 &0.867$\pm$0.001  &0.641$\pm$0.000 &0.601$\pm$0.010 &0.728$\pm$0.008    &0.334$\pm$0.008 &0.313$\pm$0.005 &0.378$\pm$0.003 \\
GMC     &0.895$\pm$0.000 &0.809$\pm$0.000 &0.895$\pm$0.000      &0.861$\pm$0.000 &0.859$\pm$0.000 &0.861$\pm$0.000  &0.920$\pm$0.000 &0.807$\pm$0.000 &0.920$\pm$0.000    &0.140$\pm$0.000 &0.058$\pm$0.000 &0.146$\pm$0.000 \\
ETLMSC  &0.962$\pm$0.000 &0.937$\pm$0.000 &0.962$\pm$0.000      &0.938$\pm$0.001 &0.893$\pm$0.001 &0.938$\pm$0.001  &0.934$\pm$0.000 &0.847$\pm$0.000 &0.934$\pm$0.000    &0.709$\pm$0.000  &0.774$\pm$0.000 &\underline{0.887$\pm$0.000} \\
LMVSC   &0.814$\pm$0.000 &0.717$\pm$0.000 &0.814$\pm$0.000      &0.904$\pm$0.000 &0.831$\pm$0.000 &0.904$\pm$0.000  &0.892$\pm$0.000 &0.726$\pm$0.000 &0.892$\pm$0.000    &0.355$\pm$0.000 &0.331$\pm$0.000 &0.399$\pm$0.000 \\
FMCNOF  &0.440$\pm$0.039 &0.345$\pm$0.046 &0.449$\pm$0.042      &0.385$\pm$0.092 &0.370$\pm$0.092 &0.386$\pm$0.090  &0.697$\pm$0.119 &0.490$\pm$0.102 &0.711$\pm$0.096    &0.218$\pm$0.033 &0.166$\pm$0.022 &0.221$\pm$0.029 \\
SFMC    &0.810$\pm$0.000 &0.721$\pm$0.000 &0.810$\pm$0.000      &0.853$\pm$0.000 &0.871$\pm$0.000 &0.873$\pm$0.000  &0.916$\pm$0.000 &0.797$\pm$0.000 &0.916$\pm$0.000    &0.188$\pm$0.000 &0.135$\pm$0.000 &0.202$\pm$0.000 \\
FPMVS-CAG &0.786$\pm$0.000 &0.686$\pm$0.000 &0.786$\pm$0.000    &0.744$\pm$0.000 &0.753$\pm$0.000 &0.744$\pm$0.000  &0.885$\pm$0.000 &0.715$\pm$0.000 &0.885$\pm$0.000    &0.463$\pm$0.000 &0.486$\pm$0.000 &0.481$\pm$0.000 \\
Orth-NTF &\underline{0.990$\pm$0.000} &\underline{0.978$\pm$0.000}    &\underline{0.990$\pm$0.000}  &\underline{0.985$\pm$0.000} &\underline{0.969$\pm$0.000} &\underline{0.985$\pm$0.000}   &\underline{0.977$\pm$0.000} &\underline{0.926$\pm$0.000} &\underline{0.977$\pm$0.000} &\underline{0.758$\pm$0.000} &\underline{0.804$\pm$0.000} &0.759$\pm$0.000 \\
Ours    &\textbf{1.000$\pm$0.000} &\textbf{1.000$\pm$0.000} &\textbf{1.000$\pm$0.000}  &\textbf{0.995$\pm$0.000} &\textbf{0.989$\pm$0.000} &\textbf{0.995$\pm$0.000}   &\textbf{0.996$\pm$0.000} &\textbf{0.983$\pm$0.000} &\textbf{0.996$\pm$0.000}  &\textbf{0.870$\pm$0.000} &\textbf{0.906$\pm$0.000} &\textbf{0.902$\pm$0.000} \\
\midrule[2pt]
\end{tabular}
}
\end{table*}

\begin{table}[h]
\caption{Clustering performance on Reuters and NoisyMnist ("OM" means out of memory, and "-" means the algorithm ran for more than three hours.)}
\label{result2}
\centering
\resizebox{\columnwidth}{!}
{
\begin{tabular}{c|ccc|ccc}
\toprule[2pt]
Datasets    &\multicolumn{3}{c}{Reuters} &\multicolumn{3}{c}{NoisyMnist}\\
\midrule
Metrices   & ACC & NMI & Purity        & ACC & NMI & Purity\\
\midrule[1pt]
CSMSC    &OM &OM &OM                 &OM &OM &OM \\
GMC     &- &- &-                    &- &- &- \\
ETLMSC  &OM &OM &OM                 &OM &OM &OM \\
LMVSC   &0.589$\pm$0.000 &0.335$\pm$0.000 &0.615$\pm$0.000        &0.388$\pm$0.000 &0.344$\pm$0.000 &0.434$\pm$0.000 \\
FMCNOF   &0.343$\pm$0.000 &0.125$\pm$0.000 &0.358$\pm$0.000        &0.333$\pm$0.000 &0.237$\pm$0.000 &0.340$\pm$0.000 \\
SFMC     &0.602$\pm$0.000 &0.354$\pm$0.000 &0.604$\pm$0.000        &0.699$\pm$0.000 &0.681$\pm$0.000 &0.727$\pm$0.000 \\
FPMVS-CAG  &0.526$\pm$0.000 &0.323$\pm$0.000 &0.603$\pm$0.000       &0.554$\pm$0.000 &0.513$\pm$0.000 &0.567$\pm$0.000 \\
Orth-NTF  &\underline{0.694$\pm$0.000} &\underline{0.686$\pm$0.000} &\textbf{0.809$\pm$0.000}  &\underline{0.701$\pm$0.000} &\underline{0.729$\pm$0.000} &\underline{0.747$\pm$0.000} \\
Ours    &\textbf{0.796$\pm$0.000} &\textbf{0.731$\pm$0.000} &\underline{0.796$\pm$0.000} &\textbf{0.819$\pm$0.000} &\textbf{0.837$\pm$0.000} &\textbf{0.851$\pm$0.000} \\
\midrule[2pt]
\end{tabular}
}
\end{table}

\subsection{Clustering Performance}
Table \ref{result1} presents the outcomes of comparative experiments conducted on four small to medium-sized datasets, while Table \ref{result2} displays these outcomes for two large-scale datasets. The superior result is highlighted in \textbf{bold}, with the second-best result being \underline{underlined}. It is evident that the methodology introduced in this study outperforms all other compared methods in the context of the designated metrics.

Compared with the non-negative matrix factorization of the fused anchor graph, our approach constructs the anchor graph of different views into tensors and performs tensor decomposition of the anchor graph tensors. This improvement stems from considering both the complementary and spatial structural information embedded in different views' anchor graphs.

Building upon Orth-NTF, our method further refines clustering performance. We introduce an additional tensor derived from NTF, designated as the labeling tensor, which delineates the relationships between anchors and clusters. Imposing tensor low-rank constraints on this tensor further augments the clustering accuracy.

\begin{figure}[htbp]
	\centering
    \setlength{\abovecaptionskip}{-0.1cm}
	\includegraphics[width=1.0\linewidth]{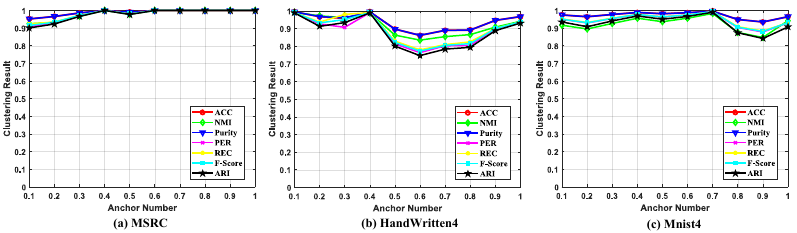}
	\caption{Clustering performance with different anchor rate on MSRC, HandWritten4 and Mnist4.}
	\label{anchorfig}
    \vspace{-3mm}
\end{figure}

\begin{figure}[htbp]
	\centering
    \setlength{\abovecaptionskip}{-0.1cm}
	\includegraphics[width=1.0\linewidth]{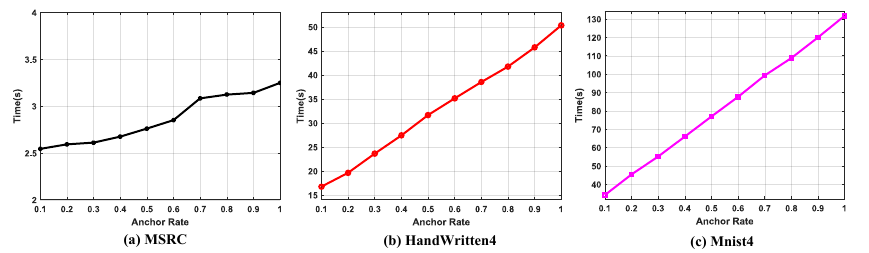}
	\caption{Running time (sec.) with different number of anchors on MSRC, HandWritten4 and Mnist4.}
	\label{timefig}
    \vspace{-5mm}
\end{figure}

\begin{figure}[htbp]
	\centering
    \setlength{\abovecaptionskip}{-0.1cm}
	\includegraphics[width=1.0\linewidth]{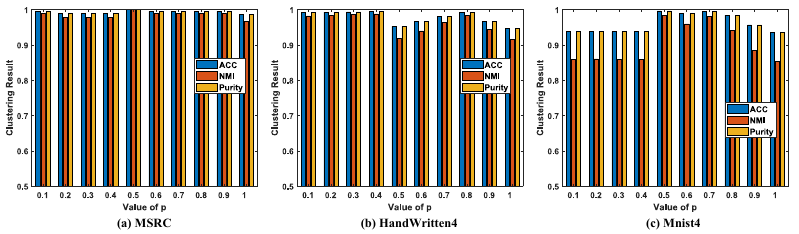}
	\caption{The influence of $p$ on clustering performance on MSRC, HandWritten4 and Mnist4.}
	\label{pfig}
    \vspace{0mm}
\end{figure}

\begin{figure}[htbp]
	\centering
    \setlength{\abovecaptionskip}{-0.1cm}
	\includegraphics[width=1.0\linewidth]{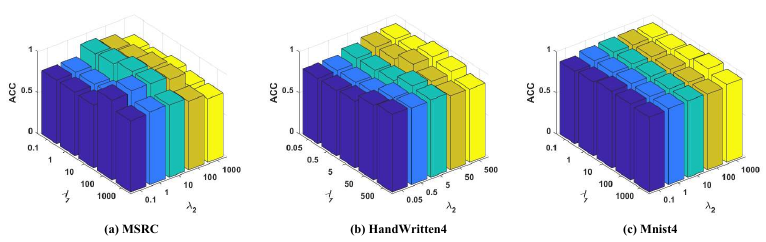}
	\caption{The influence of $\lambda_1$ and $\lambda_2$ on clustering performance on MSRC, HandWritten4 and Mnist4.}
	\label{lambdafig}
    \vspace{0mm}
\end{figure}

\subsection{Parametric Analysis}
In this subsection, we examine the impact of various parameters—anchor rate, the value of $p$, and the values of $\lambda_1$ and $\lambda_2$—on clustering performance.

Figure \ref{anchorfig} demonstrates how the anchor rate influences the seven clustering metrics. We observe that an anchor rate around 0.4 yields satisfactory clustering results across all three datasets. The MSRC dataset, being smaller, requires a higher anchor rate to adequately represent its limited sample size, thereby improving clustering performance. Conversely, medium-sized datasets like HandWritten4 and Mnist4, achieve relatively excellent clustering results at anchor rates of 0.4 and 0.7, respectively. Additionally, as depicted in Figure \ref{timefig}, the algorithm's running time is approximately linear with respect to the anchor rate. Hence, the lowest feasible anchor rate is recommended, provided it satisfies clustering performance criteria.

Figure \ref{pfig} explores the influence of the value of $p$ in the tensor Schatten $p$-norm on clustering outcomes. Notably, our model uses the same $p$ value for both tensor Schatten $p$-norm regularization terms to simplify parameter tuning. Optimal clustering is achieved on the MSRC, HandWritten4, and Mnist4 datasets with $p$ values of 0.5, 0.4, and 0.5, respectively. This finding suggests that the tensor Schatten $p$-norm effectively leverages the low-rank nature of views, enhancing the extraction of complementary information across different views and thus improving clustering performance.

Figure \ref{lambdafig} presents the clustering performance under various $\lambda_1$ and $\lambda_2$ combinations. To determine the optimal values of $\lambda_1$ and $\lambda_2$, we first estimate their range based on the magnitude of the tensor Schatten $p$-norm regularization, then refine our selection through detailed fine-tuning within this range.

\subsection{Ablation Experiments}
As shown in Table \ref{ablation}, we conducted ablation experiments on two tensor Schatten $p$-norm regularization on MSRC and Scene15. It can be seen that the tensor Schatten $p$-norm of the cluster label on the samples is more important than that on the anchors, i.e., ${\left \|{\bm{\mathcal H}}\right \|}_\Sp^p$ is more important than ${\left \|{\bm{\mathcal G}}\right \|}_\Sp^p$.
The rationale behind this approach is that imposing low-rank tensor constraints on $\bm{\mathcal H}$ effectively excavates both the complementary and spatial structural information inherent in the clustering index matrices derived from various views. Meanwhile, $\bm{\mathcal G}$ encapsulates the clustering distribution of the anchors. The concurrent application of constraints on both $\bm{\mathcal H}$ and $\bm{\mathcal G}$ significantly enhances the overall clustering efficacy.

\begin{table}[h]
\caption{Ablation experiments on MSRC and Scene15}
\label{ablation}
\centering
\resizebox{\columnwidth}{!}
{
\begin{tabular}{cc|ccc|ccc}
\midrule[2pt]
\multicolumn{2}{c}{Situations}	  &\multicolumn{3}{c}{MSRC} &\multicolumn{3}{c}{Scene15} \\
\midrule
${\left \|{\bm{\mathcal H}}\right \|}_\Sp^p$ &${\left \|{\bm{\mathcal G}}\right \|}_\Sp^p$ & ACC & NMI & Purity & ACC & NMI & Purity \\
\midrule[1pt]
\ding{56}  &\ding{56} &0.789 &0.713 &0.789 &0.499 &0.490 &0.541 \\
\ding{56}  &\ding{52} &0.952 &0.908 &0.952 &0.628 &0.562 &0.649 \\
\ding{52}  &\ding{56} &0.971 &0.943 &0.971 &0.837 &0.860 &0.858 \\
\ding{52}  &\ding{52} &1.000 &1.000 &1.000 &0.870 &0.906 &0.902 \\
\bottomrule[2pt]
\end{tabular}
}
\end{table}

\section{Conclusion}\label{conclusion}
In this paper, we execute non-negative tensor factorization on anchor graph tensors, which are assembled from different view-specific anchor graphs. This simultaneous processing of all view data guarantees the comprehensive utilization of inter-view information. Moreover, we assert that the tensors derived from this decomposition reflect the clustering of both samples and anchors, thereby enhancing the interpretability of the decomposition. We introduce an optimization approach specifically tailored for non-negative tensor factorization and affirm the convergence of our algorithm. Ultimately, extensive experimental evaluations substantiate the efficacy of our proposed methodology.

%\ifCLASSOPTIONcompsoc
%\else
%\fi
%% Can use something like this to put references on a page
%% by themselves when using endfloat and the captionsoff option.
%\ifCLASSOPTIONcaptionsoff
%  \newpage
%\fi

{\small
\bibliographystyle{IEEEtran}
\bibliography{egbib}
}

\onecolumn
\appendix
\section{Appendix: Proof of Theorem \ref{thm1}}
\subsection{Proof of the 1st part}
\begin{lemma}[Proposition 6.2 of \cite{lewis2005nonsmooth}]\label{lewis}
Suppose $F: \mathbb{R}^{n_1\times n_2}\rightarrow \mathbb{R}$ is represented as $F(X)=f \circ \sigma(X)$, where $X\in\mathbb{R}^{n_1\times n_2} $ with SVD
 $X=U \mathrm{diag}(\sigma_1, \ldots, \sigma_n) V^{\mathrm{T}}$, $n=\min(n_1, n_2)$, and $f$ is differentiable. The gradient of $F(X)$ at $X$ is
\begin{equation}
\label{deritheorem}
\frac{\partial F(X)}{\partial X}=U \mathrm{diag}(\theta) V^{\mathrm{T}},
\end{equation}
where $\theta=\frac{\partial f(y)}{\partial y}|_{y=\sigma (X)}$.
\end{lemma}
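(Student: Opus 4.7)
The plan is to exploit the bi-orthogonal invariance of $F$ to reduce the gradient computation to the case of a diagonal argument, and then extract each partial derivative by a direct first-order perturbation in a coordinate direction.

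First I would verify the invariance: since $\sigma(UXV^{\mathrm{T}}) = \sigma(X)$ for any orthogonal $U$ and $V$, we get $F(UXV^{\mathrm{T}}) = f(\sigma(UXV^{\mathrm{T}})) = F(X)$. (This implicitly uses that $f$ is absolutely symmetric on the relevant domain, so that $F = f \circ \sigma$ is well-defined irrespective of how ties in the singular-value ordering are broken.) Differentiating the identity $F(UYV^{\mathrm{T}}) = F(Y)$ in $Y$ via the chain rule yields $\nabla F(Y) = U^{\mathrm{T}}\,\nabla F(UYV^{\mathrm{T}})\,V$. Evaluated at $Y = \Sigma = \mathrm{diag}(\sigma_1,\ldots,\sigma_n)$ and rearranged, this gives $\nabla F(X) = U\,\nabla F(\Sigma)\,V^{\mathrm{T}}$, reducing the claim to showing $\nabla F(\Sigma) = \mathrm{diag}(\theta)$.

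Next I would compute $\partial F/\partial X_{ij}$ at $X = \Sigma$ by probing in the coordinate directions $E_{ij}$ (the matrix unit with a $1$ in position $(i,j)$). In the diagonal direction $i=j$, $\Sigma + tE_{ii}$ remains diagonal with only the $i$-th singular value shifted by $t$, so by differentiability of $f$ we get $\partial F/\partial X_{ii}\big|_{X=\Sigma} = \partial f/\partial y_i\big|_{y=\sigma(X)} = \theta_i$. For an off-diagonal direction $i \ne j$, a short calculation gives
\begin{equation*}
(\Sigma + tE_{ij})(\Sigma + tE_{ij})^{\mathrm{T}} = \Sigma^2 + t\,\sigma_j\,(E_{ij}+E_{ji}) + O(t^2),
\end{equation*}
whose first-order perturbation is purely off-diagonal; classical first-order eigenvalue perturbation of a diagonal matrix then gives $\sigma_k(X+tE_{ij})^2 = \sigma_k^2 + O(t^2)$, so $\partial F/\partial X_{ij}\big|_{X=\Sigma} = 0$. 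Assembling the entries yields $\nabla F(\Sigma) = \mathrm{diag}(\theta)$, and combining with the invariance reduction completes the proof.

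The main obstacle will be the degenerate case of repeated singular values, where the first-order eigenvalue perturbation formula is delicate because singular vectors inside a degenerate block are not uniquely determined. To handle this I would invoke the absolute symmetry of $f$: within any block of equal singular values, $\theta$ is constant, so $U\,\mathrm{diag}(\theta)\,V^{\mathrm{T}}$ is invariant under the residual orthogonal gauge freedom in the SVD and the statement remains well-posed. The cleanest route is either a smoothing/density argument in which $f$ is approximated by symmetric smooth functions whose gradients are controlled on each block, or a direct appeal to Lewis's original Danskin-type argument using the variational characterization of $\sigma_k$. A parallel remark is needed at vanishing singular values, where the sign ambiguity in the corresponding left/right singular vectors is again resolved by absolute symmetry of $f$, which forces $\theta_k = 0$ whenever $\sigma_k = 0$.
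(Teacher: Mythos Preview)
The paper does not supply its own proof of this lemma: it is quoted verbatim as Proposition~6.2 of \cite{lewis2005nonsmooth} and used as a black box in the appendix to bound $\nabla_{\bm{\mathcal{J}}}\|\bm{\mathcal{J}}\|_{\Sp}^p$. So there is nothing to compare against on the paper's side.

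Your proposed argument is a standard and essentially correct route to the result. The reduction via bi-orthogonal invariance to a diagonal argument is exactly the right first move, and your coordinate-direction perturbation at $\Sigma$ handles the generic (distinct, nonzero singular values) case cleanly. You also correctly flag the two genuine subtleties --- repeated singular values and zero singular values --- and your remedy (absolute symmetry of $f$ forces $\theta$ to be constant on equal-$\sigma$ blocks and to vanish where $\sigma_k=0$, so the right-hand side of \eqref{deritheorem} is gauge-invariant) is the right observation. One small point: in the off-diagonal perturbation step you appeal to ``classical first-order eigenvalue perturbation of a diagonal matrix,'' which as stated already presupposes simple eigenvalues; to make the argument self-contained in the degenerate case you would either pass to a density argument (perturb $\Sigma$ to have distinct entries, use continuity of both sides) or, as you suggest, invoke Lewis's variational treatment directly. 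Either closes the gap.
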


To minimize $\mathcal{J}$ at the $(k+1)$\textsuperscript{th} step in \eqref{solveJ2}, the optimal solution $\mathcal{J}_{k+1}$ must satisfy the first-order optimality condition:
\begin{equation}
    \lambda\nabla_{\bm{\mathcal {J}}}\|\bm{\mathcal {J}}_{k+1}\|^p_{\Sp}+\rho_k(\bm{\mathcal {J}}_{k+1}-\bm{{\mathcal{H}}}_{k+1}-\dfrac{1}{\rho_k}\bm{{\mathcal{Y}}}_{2,k})=0.
\end{equation}

Recall that for $0<p<1$, to address the singularity of $(|\eta|^p)'=p\eta/|\eta|^{2-p}$ near $\eta=0$, we adopt the approximation for $0<\epsilon\ll 1$:
\begin{equation}
    \partial |\eta|^p\approx\dfrac{p\eta}{\max\{\epsilon^{2-p},|\eta|^{2-p}\}}.
\end{equation}
Defining $\overline {\bm{\mathcal {J}}}^{(i)}={\overline {\bm{\mathcal {U}}}}^{(i)}\mathrm{diag}\left(\sigma_j(\overline {\bm{\mathcal {J}}}^{(i)})\right){\overline {\bm{\mathcal {V}}}}^{(i)\mathrm{H}}$, it follows from Lemma~\ref{lewis} that
\begin{align}
    \frac{\partial \|{\overline {\bm{\mathcal {J}}}}^{(i)}\|^p_{\Sp}}{\partial{\overline {\bm{\mathcal {J}}}}^{(i)}}={\overline {\bm{\mathcal {U}}}}^{(i)}\mathrm{diag}\left(\dfrac{p\sigma_j(\overline {\bm{\mathcal {J}}}^{(i)})}{\max\{\epsilon^{2-p},|\sigma_j(\overline {\bm{\mathcal {J}}}^{(i)})|^{2-p}\}}\right){\overline {\bm{\mathcal {V}}}}^{(i)\mathrm{H}}.
\end{align}
Hence, we obtain
\begin{align}
    \dfrac{p\sigma_j(\overline {\bm{\mathcal {J}}}^{(i)})}{\max\{\epsilon^{2-p},|\sigma_j(\overline {\bm{\mathcal {J}}}^{(i)})|^{2-p}\}}\leq \dfrac{p}{\epsilon^{1-p}} \Longrightarrow \left\|\frac{\partial \|{\overline {\bm{\mathcal {J}}}}^{(i)}\|^p_{\Sp}}{\partial{\overline {\bm{\mathcal {J}}}}^{(i)}}\right\|^2_F\leq \sum^{n}_{i=1} \dfrac{p^2}{\epsilon^{2(1-p)}}.
\end{align}
Thus, $\frac{\partial \|{\overline {\bm{\mathcal {J}}}}\|^p_{\Sp}}{\partial{\overline {\bm{\mathcal {J}}}}}$ is bounded.

Let us define $\widetilde{\mathbf{F}}_{V} = \frac{1}{\sqrt{V}}\mathbf{F}_{V}$, where $\mathbf{F}_{V}$ represents the discrete Fourier transform matrix of size $V\times V$, and $\mathbf{F}^{\mathrm{H}}_{V}$ denotes its conjugate transpose. Considering $\bm{\mathcal {J}}=\overline {\bm{\mathcal {J}}}\times_3 \widetilde{\mathbf{F}}_{V}$ and applying the chain rule in matrix calculus, we obtain that
$$\nabla_{\bm{\mathcal {J}}}\|\bm{\mathcal {J}}\|^p_{\Sp}=\frac{\partial \|{\bm{\mathcal {J}}}\|^p_{\Sp}}{\partial{\overline {\bm{\mathcal {J}}}}}\times_3 \widetilde{\mathbf{F}}_{V}^{\mathrm{H}}$$
is bounded.

Consequently, we have
\begin{align*}
&\bm{{\mathcal{Y}}}_{2,k+1}=\bm{{\mathcal{Y}}}_{2,k}+\rho_{k}(\bm{{\mathcal{H}}}_{k+1}-\bm{\mathcal {J}}_{k+1})\\&\Longrightarrow \lambda_1\nabla_{\bm{\mathcal {J}}}\|\bm{\mathcal {J}}_{k+1}\|^p_{\Sp}=\bm{{\mathcal{Y}}}_{2,k+1},
\end{align*}
indicating that $\bm{{\mathcal{Y}}}_{2,k+1}$ is also bounded. Similarly, based on \eqref{solveJ2} and the update $\bm{{\mathcal{Y}}}_{3,k+1}=\bm{{\mathcal{Y}}}_{3,k}+\sigma_{k}(\bm{{\mathcal{G}}}_{k+1}-\bm{\mathcal {F}}_{k+1})$, it follows that $\bm{{\mathcal{Y}}}_{3,k+1}$ is bounded as well.

To minimize ${\bm{\mathcal{\bar H}}^{(v)}}$ at step $k+1$ in \eqref{solveH2}, the optimal ${\bm{\mathcal{\bar H}}_{k+1}^{(v)}}$ must satisfy the first-order optimality condition
$$2{\bm{\mathcal{\bar H}}_{k+1}^{(v)}}={\bm{\mathcal{\bar Q}}_{k}^{(v)}}+\frac{\bm{\mathcal{\bar{Y}}}_{1,k}}{\mu_k}+{\bm{\mathcal{\bar J}}_{k}^{(v)}}+\frac{\bm{\mathcal{\bar{Y}}}_{2,k}}{\rho_k}.$$
Utilizing the update rule $\bm{\mathcal{\bar Y}}_{1,k+1} ^{(v)}= {\bm{\mathcal{\bar{Y}}}_{1,k}^{(v)}} + \mu_k({\bm{\mathcal{\bar Q}}_{k}^{(v)}}-{\bm{\mathcal{\bar H}}_{k}^{(v)}})$ and $\bm{\mathcal{\bar Y}}_{2,k+1} ^{(v)}= {\bm{\mathcal{\bar{Y}}}_{2,k}^{(v)}} + \rho_k({\bm{\mathcal{\bar J}}_{k}^{(v)}}-{\bm{\mathcal{\bar H}}_{k}^{(v)}})$, we derive that
$$\frac{\bm{\mathcal{\bar Y}}_{1,k+1}^{(v)}}{\mu_k}+\frac{\bm{\mathcal{\bar Y}}_{2,k+1} ^{(v)}}{\rho_k}+2({\bm{\mathcal{\bar H}}_{k}^{(v)}}-{\bm{\mathcal{\bar H}}_{k+1}^{(v)}})=0.$$
Therefore, due to the boundedness of $\bm{{\mathcal{Y}}}_{2,k+1}$, we have $\bm{{\mathcal{Y}}}_{1,k+1}$ is also bounded.

Furthermore, utilizing the update rules $\bm{\mathcal Y}_1=\bm{\mathcal Y}_1+\mu(\bm{\mathcal H}-\bm{\mathcal Q})$, $\bm{\mathcal Y}_2=\bm{\mathcal Y}_2+\rho(\bm{\mathcal H}-\bm{\mathcal J})$, and $\bm{\mathcal Y}_3=\bm{\mathcal Y}_3+\sigma(\bm{\mathcal G}-\bm{\mathcal F})$, we can infer (for $i=1,2,3$)
\begin{align}
\label{eq:Lk_ieq}
&\bm{\mathcal L} (\bm{\mathcal Q}_{k+1}, \bm{\mathcal G}_{k+1}, \bm{\mathcal H}_{k+1},\bm{\mathcal J}_{k+1},\bm{\mathcal F}_{k+1};\bm{{\mathcal{Y}}}_{i,k})\\
& \leq  \bm{\mathcal L} (\bm{\mathcal Q}_{k}, \bm{\mathcal G}_{k}, \bm{\mathcal H}_{k},\bm{\mathcal J}_{k},\bm{\mathcal F}_{k};\bm{{\mathcal{Y}}}_{i,k}) \nonumber\\
&= \bm{\mathcal L} (\bm{\mathcal Q}_{k}, \bm{\mathcal G}_{k}, \bm{\mathcal H}_{k},\bm{\mathcal J}_{k},\bm{\mathcal F}_{k};\bm{{\mathcal{Y}}}_{i,k-1})\nonumber\\
&+\frac{\sigma_k+\sigma_{k-1}}{2\sigma^2_{k-1}}\|\bm{{\mathcal{Y}}}_{3,k}-\bm{{\mathcal{Y}}}_{3,k-1}\|_F^2+ \frac{\|\bm{{\mathcal{Y}}}_{3,k}\|_F^2}{2\sigma_k}- \frac{\|\bm{{\mathcal{Y}}}_{3,k-1}\|_F^2}{2\sigma_{k-1}}\nonumber\\
&+\frac{\rho_k+\rho_{k-1}}{2\rho^2_{k-1}}\|\bm{{\mathcal{Y}}}_{2,k}-\bm{{\mathcal{Y}}}_{2,k-1}\|_F^2+ \frac{\|\bm{{\mathcal{Y}}}_{2,k}\|_F^2}{2\rho_k}- \frac{\|\bm{{\mathcal{Y}}}_{2,k-1}\|_F^2}{2\rho_{k-1}}\nonumber\\
&+\frac{\mu_k+\mu_{k-1}}{2\mu^2_{k-1}}\|\bm{{\mathcal{Y}}}_{1,k}-\bm{{\mathcal{Y}}}_{1,k-1}\|_F^2+ \frac{\|\bm{{\mathcal{Y}}}_{1,k}\|_F^2}{2\mu_k}- \frac{\|\bm{{\mathcal{Y}}}_{1,k-1}\|_F^2}{2\mu_{k-1}}.\nonumber
\end{align}
Therefore, by summing both sides of \eqref{eq:Lk_ieq} from $k=1$ to $n$, we obtain
\begin{equation}
\begin{aligned}
&\bm{\mathcal L} (\bm{\mathcal Q}_{n+1}, \bm{\mathcal G}_{n+1}, \bm{\mathcal H}_{n+1},\bm{\mathcal J}_{n+1},\bm{\mathcal F}_{n+1};\bm{{\mathcal{Y}}}_{i,n}) \\
\leq & \bm{\mathcal L} (\bm{\mathcal Q}_{1}, \bm{\mathcal G}_{1}, \bm{\mathcal H}_{1},\bm{\mathcal J}_{1},\bm{\mathcal F}_{1};\bm{{\mathcal{Y}}}_{i,0})) \\
+&\frac{\|\bm{{\mathcal{Y}}}_{3,n}\|_F^2}{2\sigma_n}- \frac{\|\bm{{\mathcal{Y}}}_{3,0}\|_F^2}{2\sigma_{0}}+\sum_{k=1}^n\left(\frac{\sigma_k+\sigma_{k-1}}{2\sigma^2_{k-1}}\|\bm{{\mathcal{Y}}}_{3,k}-\bm{{\mathcal{Y}}}_{3,k-1}\|_F^2\right) \\
+&\frac{\|\bm{{\mathcal{Y}}}_{2,n}\|_F^2}{2\rho_n}- \frac{\|\bm{{\mathcal{Y}}}_{2,0}\|_F^2}{2\rho_{0}}+\sum_{k=1}^n\left(\frac{\rho_k+\rho_{k-1}}{2\rho^2_{k-1}}\|\bm{{\mathcal{Y}}}_{2,k}-\bm{{\mathcal{Y}}}_{2,k-1}\|_F^2\right) \\
+&\frac{\|\bm{{\mathcal{Y}}}_{1,n}\|_F^2}{2\mu_n}- \frac{\|\bm{{\mathcal{Y}}}_{1,0}\|_F^2}{2\mu_{0}}+\sum_{k=1}^n\left(\frac{\mu_k+\mu_{k-1}}{2\mu^2_{k-1}}\|\bm{{\mathcal{Y}}}_{1,k}-\bm{{\mathcal{Y}}}_{1,k-1}\|_F^2\right).
\label{eq:Lk_sum}
\end{aligned}
\end{equation}

Consider that
\[
\sum_{k=1}^{\infty}\frac{\rho_k+\rho_{k-1}}{2\rho_{k-1}^2}<\infty, \quad \sum_{k=1}^{\infty}\frac{\mu_k+\mu_{k-1}}{2\mu_{k-1}^2}<\infty, \quad \sum_{k=1}^{\infty}\frac{\sigma_k+\sigma_{k-1}}{2\sigma_{k-1}^2}<\infty,
\]
it follows that the right-hand side of \eqref{eq:Lk_sum} is finite, thereby ensuring that $\bm{\mathcal L} (\bm{\mathcal Q}_{n+1}, \bm{\mathcal G}_{n+1}, \bm{\mathcal H}_{n+1}, \bm{\mathcal J}_{n+1}, \bm{\mathcal F}_{n+1};\bm{{\mathcal{Y}}}_{i,n})$ is bounded. Referring to \eqref{objective function},
\begin{align}
\label{eq:Ln_bdd}
&\bm{\mathcal L} (\bm{\mathcal Q}_{n+1}, \bm{\mathcal G}_{n+1}, \bm{\mathcal H}_{n+1}, \bm{\mathcal J}_{n+1}, \bm{\mathcal F}_{n+1};\bm{{\mathcal{Y}}}_{i,n}) \nonumber\\
&= \sum\limits_{v = 1}^{V} {{\left\| \bm{\mathcal{\bar S}}^{(v)} - \bm{\mathcal{\bar Q}}_{n+1}^{(v)}(\bm{\mathcal{\bar G}}_{n+1}^{(v)})^{\mathrm{T}} \right \|}_F^2} \nonumber\\
&+\lambda_1\|\bm{\mathcal {J}}_{n+1}\|^p_{\Sp} + \frac{\rho_{n}}{2}\|\bm{\mathcal {H}}_{n+1}-\bm{\mathcal {J}}_{n+1}+\frac{\bm{{\mathcal{Y}}}_{2,n}}{\rho_{n}}\|_F^2\nonumber\\
&+\lambda_2\|\bm{\mathcal {F}}_{n+1}\|^p_{\Sp} + \frac{\sigma_{n}}{2}\|\bm{\mathcal {G}}_{n+1}-\bm{\mathcal {F}}_{n+1}+\frac{\bm{{\mathcal{Y}}}_{3,n}}{\sigma_{n}}\|_F^2\nonumber\\
&+ \frac{\mu_{n}}{2}\sum\limits_{v = 1}^{V}\|\bm{\mathcal{\bar Q}}_{n+1}^{(v)}-\bm{\mathcal{\bar H}}_{n+1}^{(v)}+\frac{\bm{\mathcal{\bar Y}}_{1,n+1} ^{(v)}}{\mu_{n}}\|_F^2,
\end{align}
with each term of \eqref{eq:Ln_bdd} being nonnegative, and given the boundedness of $\bm{\mathcal L} (\bm{\mathcal Q}_{n+1}, \bm{\mathcal G}_{n+1}, \bm{\mathcal H}_{n+1}, \bm{\mathcal J}_{n+1}, \bm{\mathcal F}_{n+1}; \bm{{\mathcal{Y}}}_{i,n})$, we infer that every term in \eqref{eq:Ln_bdd} is bounded. Furthermore, the boundedness of $\|\bm{{\mathcal{J}}}_{n+1}\|^p_{\Sp}$ suggests that all singular values of $\bm{{\mathcal{J}}}_{n+1}$ are within limits, implying that $\|\bm{{\mathcal{J}}}_{n+1}\|^2_F$ (the sum of squares of singular values) is also bounded. Thus, the sequence $\{\bm{{\mathcal{J}}}_k\}$ is bounded. Similarly, the sequence $\{\bm{{\mathcal{F}}}_k\}$ is also confined within bounds.

Given that $$\bm{{\mathcal{Y}}}_{1,k+1}=\bm{{\mathcal{Y}}}_{1,k}+\mu_{k}(\bm{{\mathcal{H}}}_{k}-\bm{{\mathcal{Q}}}_{k}) \Longrightarrow \bm{{\mathcal{Q}}}_{k}=\bm{{\mathcal{H}}}_{k}+\frac{\bm{{\mathcal{Y}}}_{1,k+1}-\bm{{\mathcal{Y}}}_{1,k}}{\mu_{k}},$$ and considering the boundedness of $\bm{{\mathcal{H}}}_{k}, \bm{{\mathcal{Y}}}_{1,k}$, it becomes evident that $\bm{{\mathcal{Q}}}_{k}$ is also bounded.

Moreover, from \eqref{solveG1}, it is clear that $\|\bm{\mathcal{\bar G}}_k^{(v)}\|^2_F \leq \|(\bm{\mathcal{\bar S}}^{(v)})^\textrm{T}\|^2_F\|\bm{\mathcal{\bar Q}}_k^{(v)}\|^2_F$, thereby ensuring that $\bm{\mathcal{\bar G}}_k^{(v)}$ is bounded. Consequently, $\bm{{\mathcal{G}}}_{k}$ is also bounded.

\subsection{Proof of the 2nd part}

From the Weierstrass-Bolzano theorem, it follows that for the bounded sequence $\mathcal{P}_{k}$, there exists at least one accumulation point. We denote one such point as $\mathcal{P}^*=\{\bm{{\mathcal{H}}}^*, \bm{{\mathcal{Q}}}^*, \bm{{\mathcal{G}}}^*, \bm{{\mathcal{J}}}^*, \bm{{\mathcal{F}}}^*, \bm{{\mathcal{Y}}_1}^{*}, \bm{{\mathcal{Y}}}^*_{2}, \bm{{\mathcal{Y}}}^*_{3}\}$. Without loss of generality, we assume that the sequence $\{\mathcal{P}_{k}\}^{+\infty}_{k=1}$ converges to $P^*$.

Considering the update rule for $\bm{{\mathcal{Y}}}_1$, we have
$${\bm{\mathcal{\bar{Y}}}_{1,k+1} ^{(v)}}= {\bm{\mathcal{\bar{Y}}}_{1,k}^{(v)}} + \mu_k({\bm{\mathcal{\bar Q}}_{k}^{(v)}}-{\bm{\mathcal{\bar H}}_{k}^{(v)}})\Longrightarrow {\bm{\mathcal{\bar Q}}^{(v)*}}={\bm{\mathcal{\bar H}}^{(v)*}}.$$

Similarly, for $\bm{{\mathcal{Y}}}_2$, we obtain
$$\bm{{\mathcal{Y}}}_{2,k+1}=\bm{{\mathcal{Y}}}_{2,k}+\rho_{k}(\bm{{\mathcal{Q}}}_{k}-\bm{{\mathcal{J}}}_{k})\Longrightarrow \bm{{\mathcal{J}}}^*=\bm{{\mathcal{Q}}}^*.$$

And for $\bm{{\mathcal{Y}}}_3$, the update rule yields
$$\bm{{\mathcal{Y}}}_{3,k+1}=\bm{{\mathcal{Y}}}_{3,k}+\sigma_{k}(\bm{{\mathcal{G}}}_{k}-\bm{{\mathcal{F}}}_{k})\Longrightarrow \bm{{\mathcal{G}}}^*=\bm{{\mathcal{F}}}^*.$$

In the $\bm{\mathcal{\bar G}}^{(v)}$-subproblem \eqref{solveG1}, we deduce
$$\bm{\mathcal{\bar G}}_k^{(v)} = (\bm{\mathcal{\bar S}}^{(v)})^\textrm{T}\bm{\mathcal{\bar Q}}_k^{(v)}\Longrightarrow{\bm{\mathcal{\bar G}}}^{(v)*} = (\bm{\mathcal{\bar S}}^{(v)})^\textrm{T}\bm{\mathcal{\bar Q}}^{(v)*}.$$

In the $\bm{\mathcal {J}}$-subproblem \eqref{solveJ2}, it follows that
$$\lambda_1\nabla_{\bm{\mathcal {J}}}\|\bm{\mathcal {J}}_{k+1}\|^p_{\Sp}=\bm{{\mathcal{Y}}}_{2,k}\Longrightarrow\bm{{\mathcal{Y}}}_2^{*}=\lambda_1\nabla_{\bm{\mathcal {J}}}\|\bm{\mathcal {J}}^*\|^p_{\Sp}.$$

In the $\bm{\mathcal {F}}$-subproblem \eqref{solveF2}, we have
$$\lambda_2\nabla_{\bm{\mathcal {F}}}\|\bm{\mathcal {F}}_{k+1}\|^p_{\Sp}=\bm{{\mathcal{Y}}}_{3,k}\Longrightarrow\bm{{\mathcal{Y}}}_3^{*}=\lambda_2\nabla_{\bm{\mathcal {F}}}\|\bm{\mathcal {F}}^*\|^p_{\Sp}.$$

Therefore, we conclude that the sequences $\bm{{\mathcal{H}}}^*, \bm{{\mathcal{Q}}}^*, \bm{{\mathcal{G}}}^*, \bm{{\mathcal{J}}}^*, \bm{{\mathcal{F}}}^*, \bm{{\mathcal{Y}}_1}^{*}, \bm{{\mathcal{Y}}}^*_{2}, \bm{{\mathcal{Y}}_3}^{*}$ satisfy the KKT conditions of the Lagrange function \eqref{objective function}.

%%%%%%%%%%%%%%%%%%%%%%%%%%%%%%%%%%%%%%%%%%%%%%%%%%%%%%%%%%%%%%%%%%%%%%%%%%%%%%%
%%%%%%%%%%%%%%%%%%%%%%%%%%%%%%%%%%%%%%%%%%%%%%%%%%%%%%%%%%%%%%%%%%%%%%%%%%%%%%%

\section{Additions to the Experiments}
\subsection{Configuration and Details}
Experiments using the MSRC, HandWritten4, Mnist4, and Scene15 datasets were conducted on a laptop equipped with an Intel Core i5-8300H CPU and 16 GB RAM, utilizing Matlab R2018b. In contrast, the Reuters and NoisyMnist datasets were processed on a standard Windows 10 server, featuring dual Intel(R) Xeon(R) Gold 6230 CPUs at 2.10 GHz and 128 GB RAM, with MATLAB R2020a.

The hyper-parameters on the different datasets are as follows:
\begin{itemize}
  \item MSRC: anchor rate = 0.4, $p$ = 0.5, $\lambda_1$ = 100, $\lambda_2$ = 1.
  \item HandWritten4: anchor rate = 0.4, $p$ = 0.4, $\lambda_1$ = 5, $\lambda_2$ = 500.
  \item Mnist4: anchor rate = 0.7, $p$ = 0.5, $\lambda_1$ = 100, $\lambda_2$ = 1000.
  \item Scene15: anchor rate = 0.1, $p$ = 0.8, $\lambda_1$ = 50, $\lambda_2$ = 50.
  \item Reuters: anchor rate = 0.01, $p$ = 0.2, $\lambda_1$ = 2000, $\lambda_2$ = 2000.
  \item NoisyMnist: anchor rate = 0.004, $p$ = 0.1, $\lambda_1$ = 8000, $\lambda_2$ = 5000.
\end{itemize}

\subsection{Visualization of Experimental Results}
The visualization of the learned labels is presented in Figure \ref{labelfig}. Furthermore, we illustrate the evolution of t-SNE during its iterations using Mnist4 as an example, as depicted in Figure \ref{tsnefig}.

\begin{figure}[htbp]
	\centering
    \setlength{\abovecaptionskip}{-0.1cm}
	\includegraphics[width=0.45\linewidth]{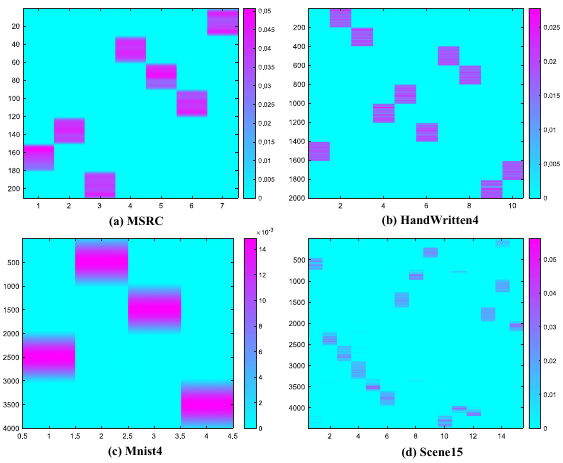}
	\caption{Clustering label visualization on MSRC, HandWritten4, Mnist4 and Scene15.}
	\label{labelfig}
    \vspace{-5mm}
\end{figure}

\begin{figure*}[htbp]
	\centering
    \setlength{\abovecaptionskip}{-0.1cm}
	\includegraphics[width=1.0\linewidth]{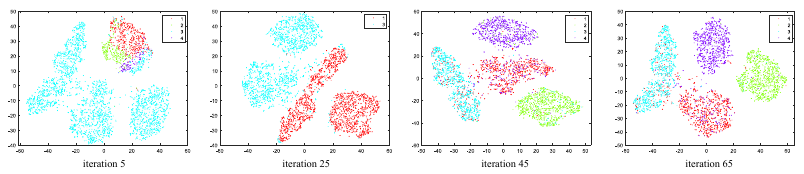}
	\caption{Iterative visualization of t-sne on Mnist4}
	\label{tsnefig}
    \vspace{-5mm}
\end{figure*}

\subsection{Experiments of Convergence}
To optimize the objective function (\ref{objective function}), we iteratively introduce three auxiliary variables: $\bm{\mathcal{Q}}$, $\bm{\mathcal{G}}$, and $\bm{\mathcal{F}}$. The convergence of the model is assessed by evaluating the differences between pairs of variables: $\bm{\mathcal{H}}$ and $\bm{\mathcal{Q}}$, $\bm{\mathcal{H}}$ and $\bm{\mathcal{J}}$, and $\bm{\mathcal{G}}$ and $\bm{\mathcal{F}}$.
Figure \ref{convergefig} depicts the variations between these variable pairs. Convergence is typically observed around the 70th iteration, where the differences stabilize, indicating a steady state. Concurrently, clustering metrics such as ACC also reach convergence, empirically demonstrating the effectiveness of our method.

\begin{figure}[htbp]
	\centering
    \setlength{\abovecaptionskip}{-0.1cm}
	\includegraphics[width=0.5\linewidth]{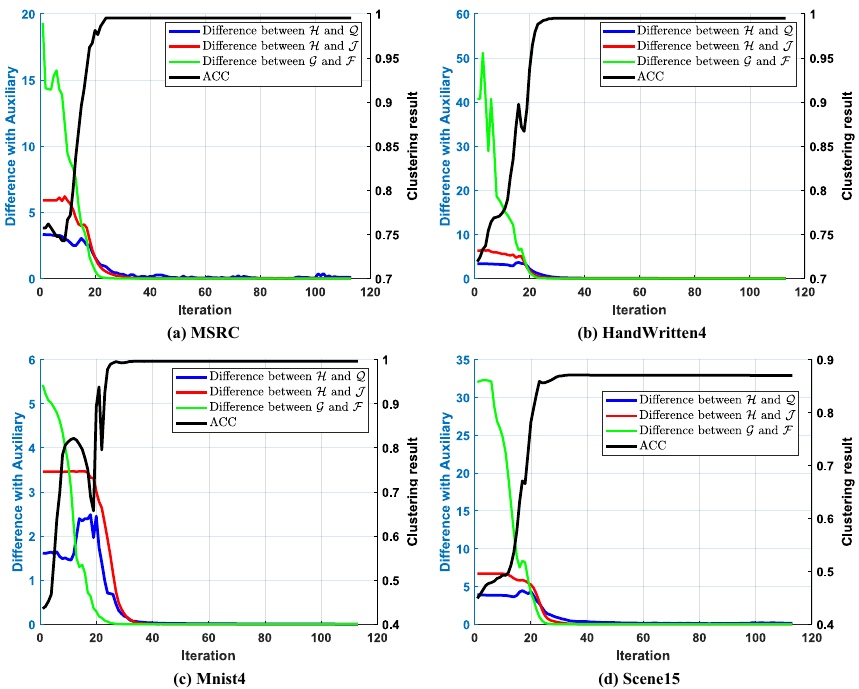}
	\caption{Convergence experiments on MSRC, HandWritten4, Mnist4 and Scene15.}
	\label{convergefig}
    \vspace{-5mm}
\end{figure}

\end{document}